\documentclass[a4paper]{llncs}
\usepackage{alltt}
\usepackage{amssymb}
\usepackage{caption}
\usepackage{paralist}
\usepackage{nicefrac}
\usepackage{booktabs}
\usepackage{xspace}
\usepackage{amsmath}
\usepackage{marvosym}
\usepackage{wasysym}
\usepackage{verbatim}
\usepackage{float}
\usepackage{hyperref}
\usepackage{multirow}
\usepackage{cite}
\usepackage[capitalize]{cleveref}
\usepackage[per-mode=symbol,detect-all]{siunitx}
\usepackage{graphics}
\usepackage{amssymb}
\usepackage{enumitem}
\usepackage[table]{xcolor}
\usepackage{booktabs}
\usepackage{colortbl}
\usepackage{ifthen}
\usepackage{mathdots}
\usepackage{arydshln}
\usepackage[outline]{contour}
\usepackage{graphicx}
\usepackage{wrapfig}
\usepackage{subfig}
\usepackage{algorithm}
\usepackage{algpseudocode}
\usepackage{MnSymbol}
\usepackage{placeins}

\definecolor{navy}{RGB}{0,0,128}




\usepackage{tikz,ifthen,pgfplots}
\usetikzlibrary{arrows,trees,backgrounds,automata,shapes,decorations,plotmarks,fit,calc,positioning,shadows,chains}
\usetikzlibrary{quotes,arrows.meta}
\tikzstyle{every pin edge}=[<-,shorten <=1pt]
\tikzstyle{neuron}=[circle,fill=black!25,minimum size=17pt,inner sep=0pt]
\tikzstyle{input neuron}=[neuron, fill=green!50]
\tikzstyle{output neuron}=[neuron, fill=red!50]
\tikzstyle{hidden neuron}=[neuron, fill=blue!50]
\tikzstyle{small neuron}        =[hidden neuron, draw, minimum size=15pt]
\tikzstyle{small input neuron}  =[input neuron , draw, minimum size=15pt]
\tikzstyle{small output neuron} =[output neuron, draw, minimum size=15pt]
\tikzstyle{annot} = [text width=4em, text centered]
\tikzstyle{nnedge} = [-{stealth},shorten >=0.1cm, shorten <=0.05cm,line width=0.8pt,black]
\tikzstyle{edge} = [->,line width = 0.3pt, shorten >=0.2cm]
\tikzstyle{edgeWide} = [->,line width = 2pt, , shorten >=0.2cm]
\usetikzlibrary{calc}

\tikzset{every picture/.style={line width=0.75pt}} 
\tikzstyle{BadSquare}=[rectangle,fill=red!30!white,minimum size=25pt,inner 
sep=0pt]
\tikzstyle{InitSquare}=[rectangle,fill=green!30!white,minimum size=25pt,inner 
sep=0pt]

\newcommand{\mysubsection}[1]{\medskip\noindent\textbf{#1}}

\newcommand{\relu}{\text{ReLU}\xspace}
\newcommand{\kisoneappendix}{\text{k=1}\xspace}
\newcommand{\kismaxkappendix}{\text{k=maxk}\xspace}
\newcommand{\kmhskcxp}{\text{K-MHS(k-Cxps)}\xspace}
\newcommand{\verifysingleton}{\text{Verify((F$\setminus$\{f\}=v,N,$Q_{\neg
			c}$)}\xspace}
\newcommand{\verifyexplanation}{\text{Verify((Explanation$\setminus$\{f\})=v,N,$Q_{\neg
			c}$)}\xspace}

\newcommand{\verifybinary}{\text{Verify((Explanation$\setminus$\{L,L$+1$,...,Mid\})=v,N,$Q_{\neg
			c}$)}\xspace}

\newcommand{\verifylocalsingletons}{\text{Verify(Explanation$\setminus \{f^\prime\}=C$,N,$Q_{\neg
			c}$)}\xspace}

\newcommand{\verifypair}{\text{Verify((F$\setminus$\{a,b\}=v,N,$Q_{\neg
			c}$)}\xspace}

\newcommand{\singletons}{\text{Singletons}\xspace}
\newcommand{\bsingletons}{\text{BSingletons}\xspace}
\newcommand{\bpairs}{\text{BPairs}\xspace}
\newcommand{\bsingletonsinstance}{\text{BS}\xspace}
\newcommand{\localsingletons}{\text{LocalSingletons}\xspace}

\newcommand{\explanation}{\text{Explanation}\xspace}
\newcommand{\featuresleft}{\text{RemainingFeatures}\xspace}

\newcommand{\tub}{\text{$T_{\text{UB}\xspace}$}\xspace}
\newcommand{\tlb}{\text{$T_{\text{LB}\xspace}$}\xspace}
\newcommand{\pairs}{\text{Pairs}\xspace}
\newcommand{\allpairs}{\text{AllPairs}\xspace}
\newcommand{\allpairsinput}{\text{Distinct pairs of F$\setminus$Singletons}\xspace}
\newcommand{\singlepair}{\text{(a,b)}\xspace}
\newcommand{\free}{\text{Free}\xspace}
\newcommand{\freewithf}{\text{Free$\ \cup\ $\{f\}}\xspace}
\newcommand{\upperb}{\text{UB}\xspace}
\newcommand{\lowerb}{\text{LB}\xspace}
\newcommand{\lowerbbundle}{\text{$LB_{Bundle}$}\xspace}
\newcommand{\allbundlesingletons}{\text{$\cup$BSingletons}\xspace}
\newcommand{\mwvcbpairs}{\text{MWVC(BPairs)}\xspace}
\newcommand{\mhsbbcxps}{\text{MHSB($Cxps_B$)}\xspace}
\newcommand{\bcxps}{\text{$Cxps_B$}\xspace}
\newcommand{\minimumbundle}{\text{$E_{\text{MB}\xspace}$}\xspace}
\newcommand{\mvc}{\text{MVC(Pairs)}\xspace}
\newcommand{\mvcbundle}{\text{MVC(BPairs)}\xspace}

\newcommand{\mhscxp}{\text{MHS(Cxps)}\xspace}
\newcommand{\mhsbpairs}{\text{MHS($\cup$Bpairs)}\xspace}
\newcommand{\mhsbsingletons}{\text{MHS(BS)}\xspace}
\newcommand{\minimumexplanation}{\text{$E_M$}\xspace}

\newcommand{\sat}{\texttt{SAT}\xspace}
\newcommand{\unsat}{\texttt{UNSAT}\xspace}




\usepackage{bussproofs}
\usepackage{gensymb}

\newif\ifcomments
\commentstrue

\newif\ifoutline
\outlinefalse

\newif\iflong
\longtrue



\renewcommand{\paragraph}[1]{\vspace{1mm}\noindent{\bf #1}\ }


\begin{document}
	
	\title{Towards Formal XAI: Formally Approximate Minimal Explanations of
		Neural Networks}
	
	\author{Shahaf Bassan \and Guy Katz}
	\institute{The Hebrew University of Jerusalem, Jerusalem,
		Israel\\
		\email{ \{shahaf.bassan, g.katz\}@mail.huji.ac.il}
	}
	
	\maketitle
	
	\begin{abstract}
		With the rapid growth of machine learning, deep neural networks
		(DNNs) are now being used in numerous domains.  Unfortunately, DNNs
		are ``black-boxes'', and cannot be interpreted by humans,
		which is a substantial concern in safety-critical systems. To
		mitigate this issue, researchers have begun working on explainable AI
		(XAI) methods, which can identify a subset of input features that
		are the cause of a DNN's decision for a given input.  Most existing
		techniques are heuristic, and cannot guarantee the correctness of
		the explanation provided. In contrast, recent and exciting attempts have shown that
		formal methods can be used to generate provably correct
		explanations. Although these methods are sound, the computational
		complexity of the underlying verification problem limits their
		scalability; and the explanations they produce might sometimes be
		overly complex. Here, we propose a novel approach to tackle 
		these limitations. We
		\begin{inparaenum}[(i)]
			\item suggest an efficient, verification-based method for finding
			\emph{minimal explanations}, which constitute a \emph{provable
				approximation} of the global, minimum explanation;
			\item show how DNN verification can assist in calculating lower and
			upper bounds on the optimal explanation;
			\item propose heuristics that significantly improve the scalability
			of the verification process; and
			\item suggest the use of \emph{bundles}, which allows us to arrive at
			more succinct and interpretable explanations.
		\end{inparaenum}
		Our evaluation shows that our approach significantly
		outperforms state-of-the-art techniques, and produces
		explanations that are more useful to humans.  We thus regard this
		work as a step toward leveraging verification technology in
		producing DNNs that are more reliable and comprehensible.
	\end{abstract}
	
	\section{Introduction}
	\label{sec:Introduction}
	Machine learning (ML) is a rapidly growing field with a wide range of
	applications, including safety-critical, high-risk systems in the
	fields of health care~\cite{esteva2019guide},
	aviation~\cite{julian2019deep} and autonomous
	driving~\cite{bojarski2016end}. Despite their success, ML models, and
	especially deep neural networks (DNNs), remain ``black-boxes''
	--- they are incomprehensible to humans and are prone to unexpected
	behaviour and errors. This issue can result in major
	catastrophes~\cite{zhou2019metamorphic, staff2019case}, and also in
	poor decision-making due to brittleness or
	bias~\cite{angwin2016machine, goodfellow2014explaining}.

	In order to render DNNs more comprehensible to humans, researchers
	have been working on \emph{explainable AI} (\emph{XAI}), where we seek
	to construct models for explaining and interpreting the decisions of
	DNNs~\cite{ribeiro2018anchors, lundberg2017unified, selvaraju2017grad,
		ribeiro2016should}. Work to date has focused on heuristic
	approaches, which provide explanations, but do not provide
	guarantees about the correctness or succinctness of these
	explanations~\cite{camburu2019can, ignatiev2019validating,
		lakkaraju2020fool}. Although these approaches are an
	important step, their limitations might result in skewed
	results, possibly failing to meet the regulatory
	guidelines of institutions and organizations such as the European Union, the US
	government, and the OECD~\cite{marques2022delivering}. Thus, producing
	DNN explanations that are provably accurate remains of utmost
	importance.
	
	More recently, the formal verification community has proposed
	approaches for providing formal and rigorous explanations for
	DNN decision making~\cite{ignatiev2020towards,
		marques2022delivering, ignatiev2019abduction,
		shih2018symbolic}. Many of these approaches rely on the
	recent and rapid developments in DNN
	verification~\cite{akintunde2019verification, avni2019run,
		baluta2019quantitative, katz2017reluplex}.  These
	approaches typically produce an
	\emph{abductive explanation} (also known as a \emph{prime
		implicant}, or
	\emph{PI-explanation})~\cite{shih2018symbolic,
		ignatiev2019abduction, shi2020tractable}: a minimum subset
	of input features, which by themselves already determine the
	classification produced by the DNN, regardless of any other
	input features. These explanations afford formal guarantees,
	and can be computed via DNN
	verification~\cite{ignatiev2019abduction}.
	
	Abductive explanations are highly useful, but there are two major
	difficulties in computing them. First, there is the issue of
	scalability: computing locally minimal explanations
	might require a polynomial number of costly invocations of the
	underlying DNN verifier, and computing a globally minimal
	explanation is even more challenging~\cite{liberatore2005redundancy, barcelo2020model, ignatiev2019abduction}.
	The second difficulty is that users may sometimes prefer
	``high-level'' explanations, not based solely on input
	features, as these may be easier to grasp and interpret
	compared to ``low-level'', complex, feature-based
	explanations. 
	
	To tackle the first difficulty, we propose here new
	approaches for more efficiently producing verification-based
	abductive explanations. More concretely, we propose a method for
	\emph{provably approximating} minimum explanations, allowing stakeholders
	to use slightly larger explanations that can be discovered much more
	quickly. To accomplish this, we leverage the recently discovered dual
	relationship between explanations and contrastive
	examples~\cite{ignatiev2020contrastive}; and also take advantage of
	the sensitivity of DNNs to small adversarial
	perturbations~\cite{su2019one}, to compute both lower and
	upper bounds for the minimum explanation. In addition, we propose
	novel heuristics for significantly expediting the underlying
	verification process. 
	
	In addressing the second difficulty, i.e.~the interpretability
	limitations of ``low-level'' explanations, we propose to
	construct explanations in terms of \emph{bundles}, which are
	sets of related features. We empirically show that using our
	method to produce bundle explanations can significantly
	improve the interpretability of the results, and even the
	scalability of the approach, while still maintaining the
	soundness of the resulting explanations.

    To summarize, our contributions include the following: 
    \begin{inparaenum}[(i)]
    \item We are the first to suggest a method that formally produces sound and minimal abductive explanations that \emph{provably approximate} the global-minimum explanation.
    \item Our three suggested novel heuristics expedite the search for
      minimal abductive explanations, significantly outperforming the
      state of the art.
    \item We suggest a novel approach for using bundles to efficiently produce sound and provable explanations that are more interpretable and succinct.
	\end{inparaenum} 
	
	
	For evaluation purposes, we implemented our approach as a
	proof-of-concept tool. 
	Although our method can be
	applied to any ML model, we focused here on DNNs, where the
	verification process is known to be
	NP-complete~\cite{katz2017reluplex}, and the scalable
	generation of explanations
	is known to be challenging~\cite{ignatiev2019abduction, shi2020tractable}. We 
	used our tool to test the approach on DNNs trained for digit and clothing classification, and also compared
	it to state-of-the-art approaches~\cite{ignatiev2019abduction,
		ignatiev2019validating}. Our results indicate that our
	approach was successful in quickly producing meaningful
	explanations, often running  40\% faster
than existing tools.
        We
	believe that these promising results showcase the potential of
	this line of work.
	
	The rest of the paper is organized as follows. Sec.~\ref{sec:background} contains background on DNNs and their
	verification, as well as on formal, minimal
	explanations. Sec.~\ref{sec:approximations} covers the main
	method for calculating approximations of minimum explanations,
	and Sec.~\ref{heuristics_local_search} covers methods for
	improving the efficiency of calculating these
	approximations. Sec.~\ref{sec:bundles} covers the use of
	\emph{bundles} in constructing ``high-level'', provable
	explanations. Next, we
	present our evaluation in
	Sec.~\ref{sec:Evaluation}. Related work is covered in
	Sec.~\ref{sec:RelatedWork}, and we conclude in
	Sec.~\ref{sec:Conclusion}.	
	\section{Background}
	\label{sec:background}

	
	\mysubsection{DNNs.}  A deep neural network (DNN)~\cite{lecun2015deep}
	is a directed graph composed of layers of nodes, commonly called
	\emph{neurons}. In feed-forward NNs the data flows from the first
	(\emph{input}) layer, through intermediate (\emph{hidden}) layers, and
	onto an \emph{output} layer. A DNN's output is calculated by assigning
	values to its input neurons, and then iteratively calculating the
	values of neurons in subsequent layers. In the case of
	\emph{classification}, which is the focus of this paper, each output
	neuron corresponds to a specific \emph{class}, and the output
	neuron with the highest value corresponds to the class the input is
	classified to.
	
	\begin{wrapfigure}{r}{0.45\textwidth}
		
		\vspace{-1.2cm}
		\begin{center}
			\includegraphics[width=0.45\textwidth]{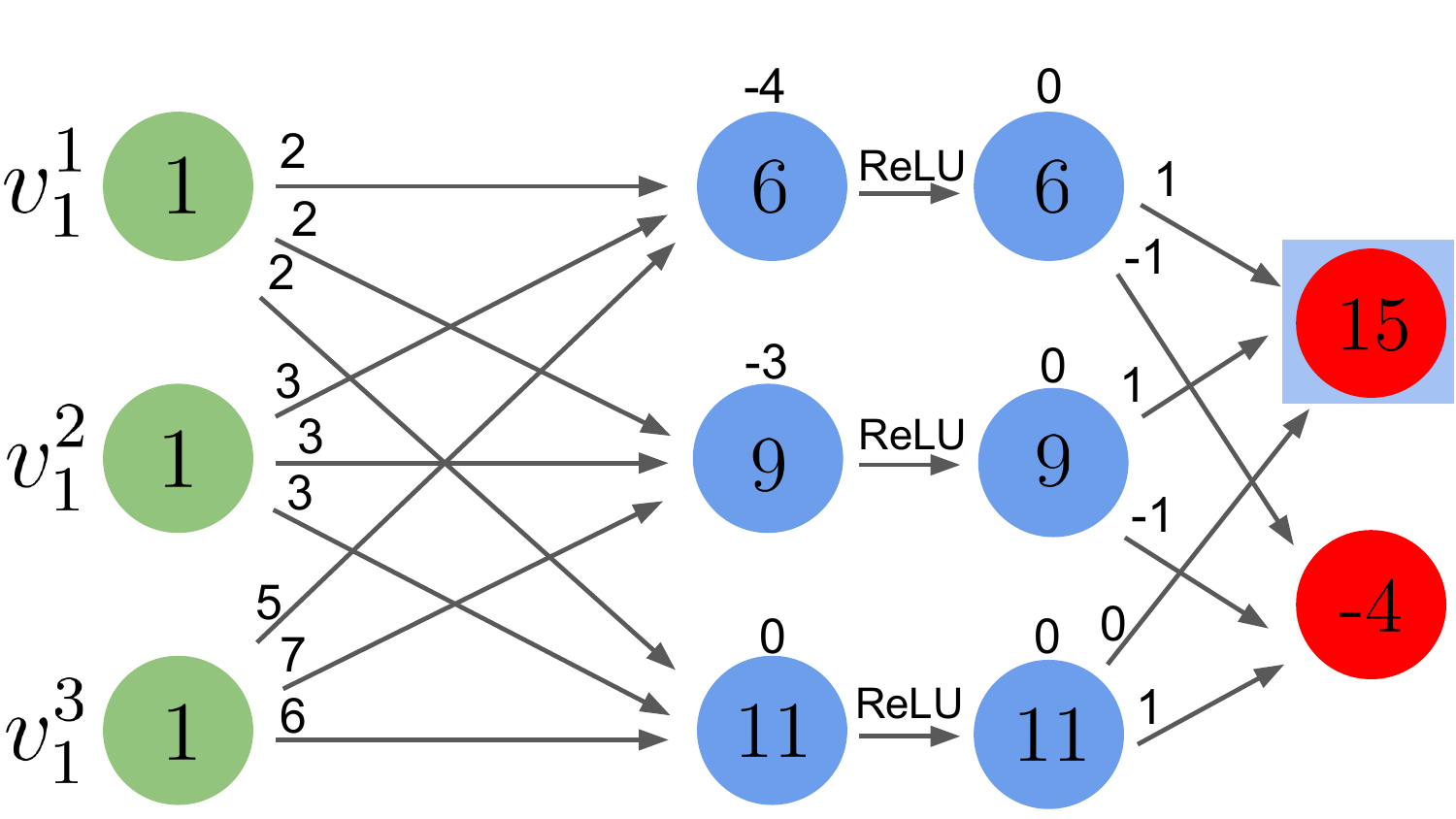}
			\caption{A simple DNN.}
			\label{fig:neural_network_example_1}
		\end{center}
		\vspace{-1cm}  
	\end{wrapfigure}
	
	Fig.~\ref{fig:neural_network_example_1} depicts a simple, feed-forward
	DNN. The input layer includes three neurons, followed by a weighted
	sum layer, which calculates an affine transformation of values from
	the input layer. Given the input
	$V_1=[1,1,1]^T$, the second layers computes the values
	$V_2=[6,9,11]^T$. Next comes a \relu{} layer, which computes the
	function $\relu(x)=\max(0,x)$ for each neuron in the preceding layer,
	resulting in $V_3=[6,9,11]^T$. The final (output) layer then computes
	an affine transformation, resulting in $V_4=[15,-4]^T$. This indicates
	that input $V_1=[1,1,1]^T$ is classified as the category corresponding
	to the first output neuron, which is assigned the greater value. 

	\mysubsection{DNN Verification.}  A DNN verification query is a tuple
	$\langle P, N, Q\rangle$, where $N$ is a DNN that maps an input vector
	$x$ to an output vector $y=N(x)$, $P$ is a predicate on $x$, and $Q$ is a
	predicate on $y$. A DNN verifier needs to decide whether there exists
	an input $x_0$ that satisfies $P(x_0) \wedge Q(N(x_0))$ (the \sat{}
	case) or not (the \unsat{} case). Typically, $P$ and $Q$ are expressed
	in the logic of real arithmetic~\cite{LiArLaBaKo20}. The DNN verification
	problem is known to be NP-Complete~\cite{katz2017reluplex}.
	
	\mysubsection{Formal Explanations.}  We focus here on
	explanations for classification problems, where a model is trained to
	predict a label for each given input.  A classification
	problem is a tuple $\langle F, D, K, N\rangle$ where
	\begin{inparaenum}[(i)]
		\item $F=\{1,...,m\}$ denotes the features;
		\item $D=\{D_1,D_2...,D_m\}$ denotes the domains of each of the
		features, i.e.~the possible values that each feature can take. The
		entire feature (input) space is hence
		$\mathbb{F}={D_1 \times D_2 \times...\times D_m}$;
		\item $K=\{c_1,c_2,...,c_n\}$ is a set of classes, i.e.~the possible
		labels; and
		\item $N:F\to K$ is a (non-constant) classification
                  function (in our case, a neural network).
	\end{inparaenum}
	A classification instance is the pair $(v,c)$, where $v\in \mathbb{F}$,
	$c\in K$, and $c=N(v)$. In other words, $v$ is mapped by the
	neural network $N$ to class $c$.
	
	Looking at $(v,c)$, we often wish to know why $v$ was
	classified as $c$. Informally, an \emph{explanation} is a
	subset of features $E\subseteq F$, such that assigning these
	features to the values assigned to them in $v$ already
	determines that the input will be classified as $c$,
	regardless of the remaining features $F\setminus E$. In other
	words, even if the values that are \emph{not} in the
	explanation are changed arbitrarily, the classification
	remains the same. More formally, given input
	$v=(v_1,...v_m)\in \mathbb{F}$ with the classification 
	$N(v)=c$, an explanation (sometimes referred to as an
	\emph{abductive explanation}, or an \emph{AXP}) is a subset of
	the features $E\subseteq F$, such that:
	\begin{equation}
		\label{eq:explanation}
		\forall(x\in \mathbb{F}).\quad [\bigwedge_{i\in E}(x_{i}=v_{i})\to(N(x)=c)]
	\end{equation}
	
	We continue with the running example from
	Fig.~\ref{fig:neural_network_example_1}.  For simplicity, we
	assume that each input neuron can only be assigned the values
        0 or 1. It can
	be observed that for input $V_1=[1,1,1]^T$, the set
	$\{ v_1^1, v_1^2 \}$ is an explanation; indeed, once the first
	two entries in $V_1$ are set to $1$, the classification
	remains the same for any value of the third entry (see
	Fig.~\ref{fig:neural_network_explanation}).  We can prove
	this by encoding a verification query
	$\langle P,N,Q\rangle = \langle E=v,N,Q_{\neg
		c}\rangle$, where $E$ is
	the candidate explanation, and $E=v$ means that we restrict
	the features in $E$ to their values in $v$; and $Q_{\neg
		c}$
	implies that the classification is not $c$.  An \unsat{} result
	for this query indicates that $E$ is an explanation for
	instance $(v,c)$.
	\begin{figure}
		\centering
		{\includegraphics[width=0.45\textwidth]{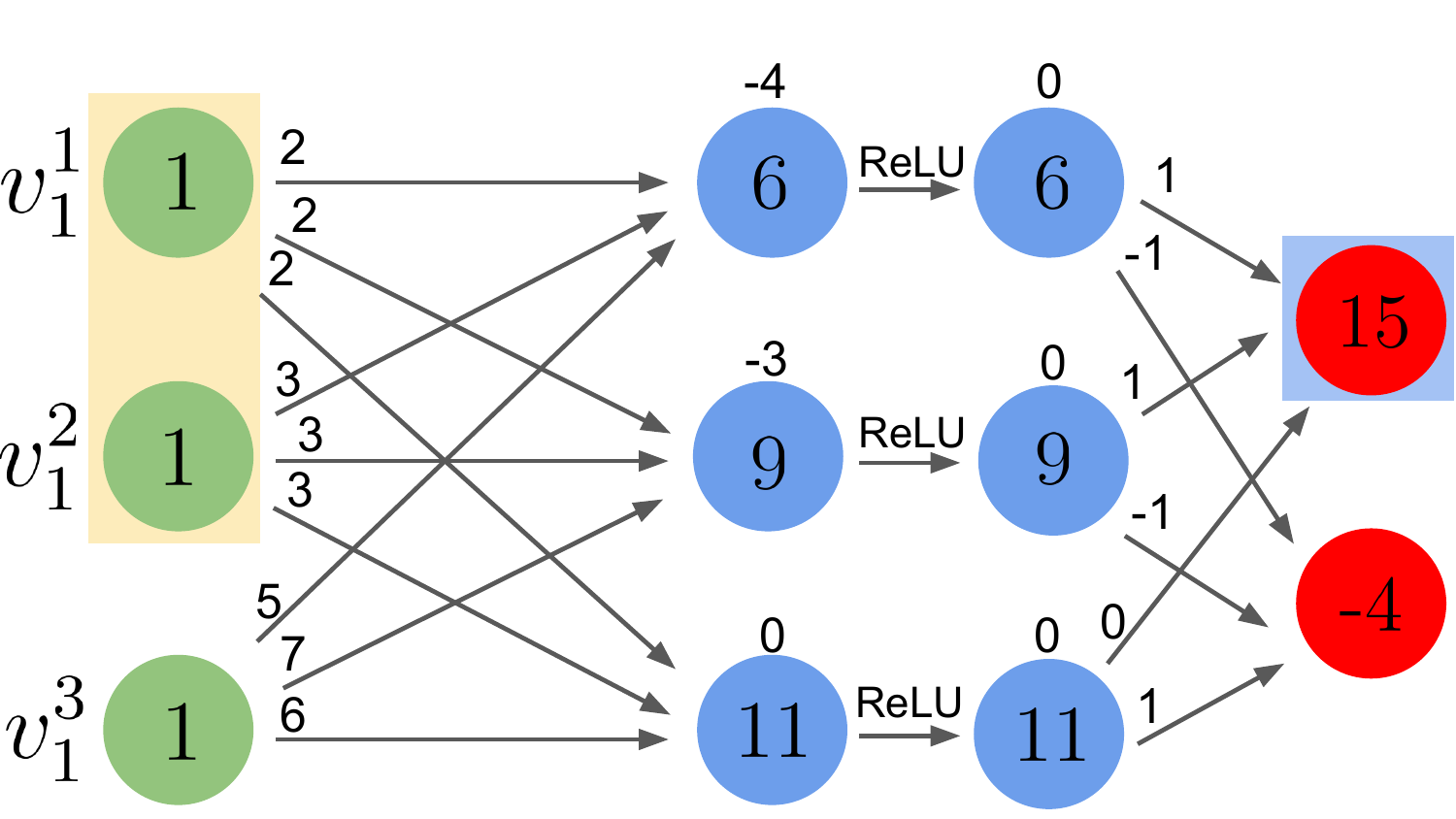}}
		\hfill
		{\includegraphics[width=0.45\textwidth]{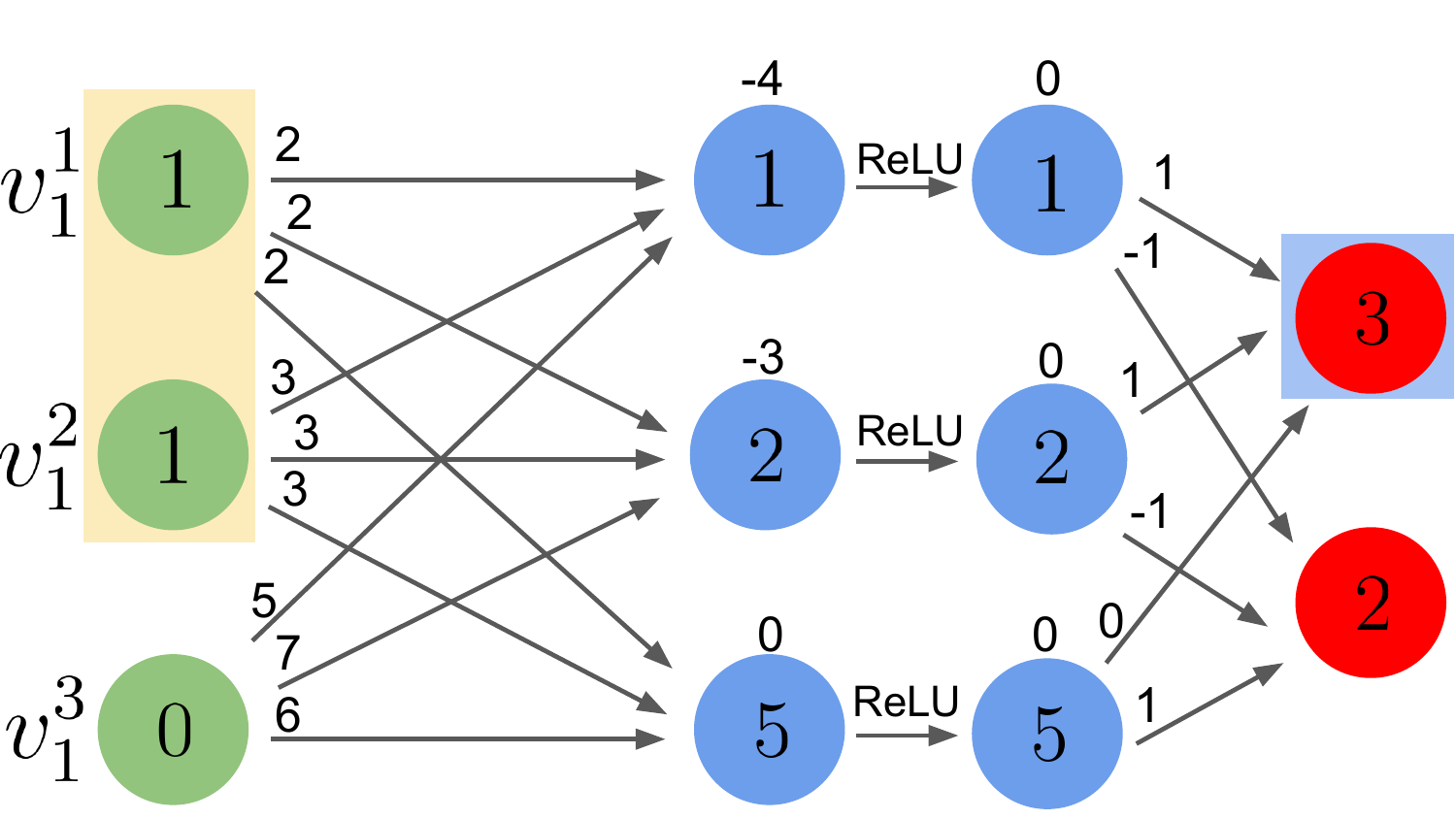}}
		\caption{$\{ v_1^1, v_1^2 \}$ is  an explanation for input $V_1=[1,1,1]^T$}
		\label{fig:neural_network_explanation}
	\end{figure}
	
	Clearly, the set of all features constitutes a trivial
	explanation. However, we are interested in \emph{smaller}
	explanation subsets, which can provide useful information
	regarding the decision of the classifier. More precisely, we search
	for \emph{minimal explanations} and \emph{minimum explanations}.  A
	subset $E\subseteq F$ is a \emph{minimal explanation} (also referred
	to as a \emph{local-minimal explanation}, or a \emph{subset-minimal
		explanation}) of instance $(v,c)$ if it is an explanation that ceases to be an
	explanation if even a single feature is removed from it: 
	\begin{equation}
		\begin{aligned}
			&(\forall(x\in \mathbb{F}).[\wedge_{i\in
				E}(x_{i}=v_{i})\to(N(x)=c)]) \wedge\\
			&(\forall(j\in E).[  \exists(y\in \mathbb{F}).[\wedge_{i\in E\setminus j}(y_{i}=v_{i})\wedge(N(y)\neq c)])
		\end{aligned}
	\end{equation}
	Fig.~\ref{fig:neural_network_minimal_explanation} demonstrates that
	$\{ v_1^1, v_1^2 \}$ is a minimal explanation in our running example:
	removing any of its features allows mis-classification.
	
	\begin{figure}	
		\centering
		{\includegraphics[width=0.45\textwidth]{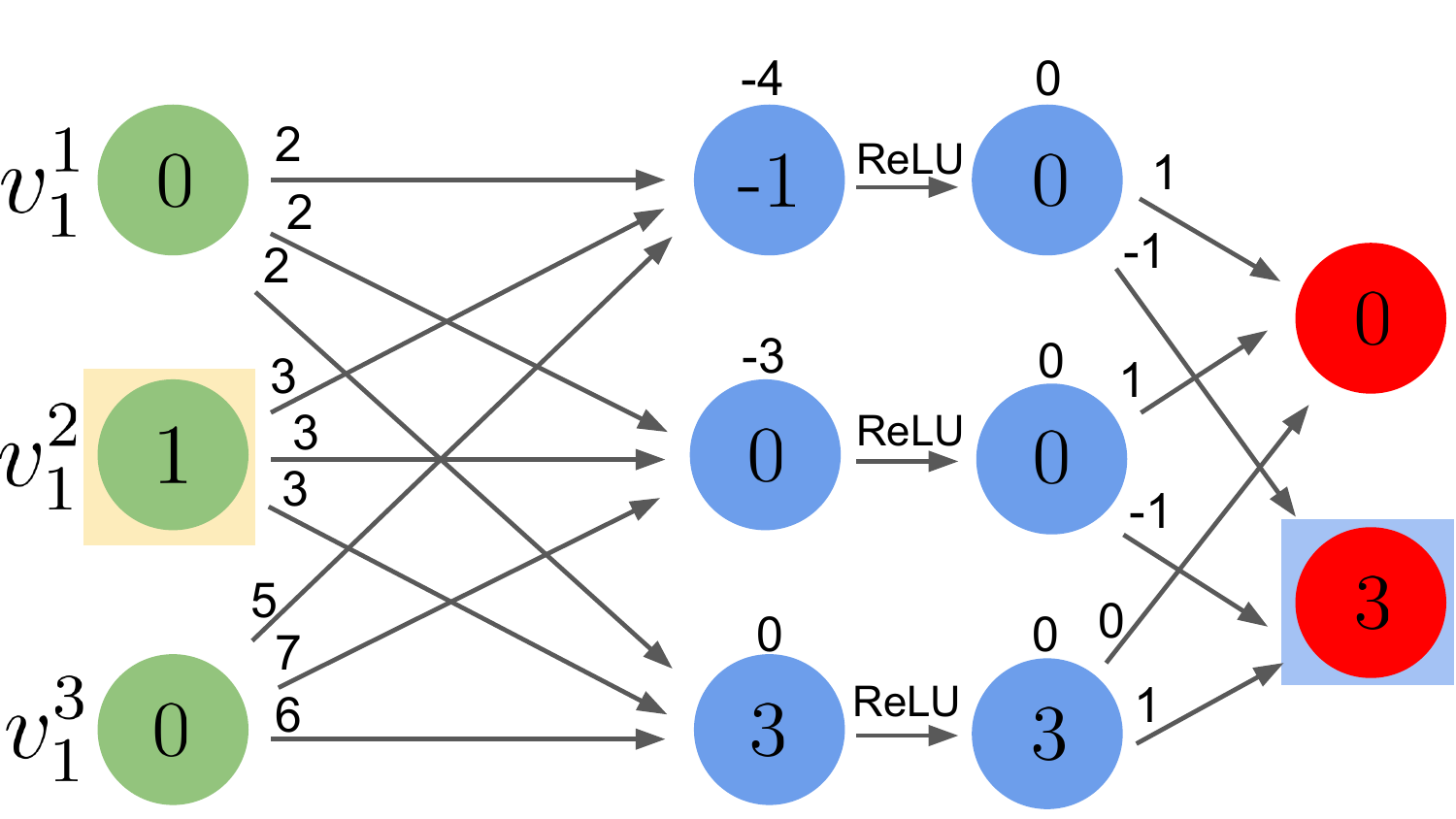}}
		\hfill
		{\includegraphics[width=0.45\textwidth]{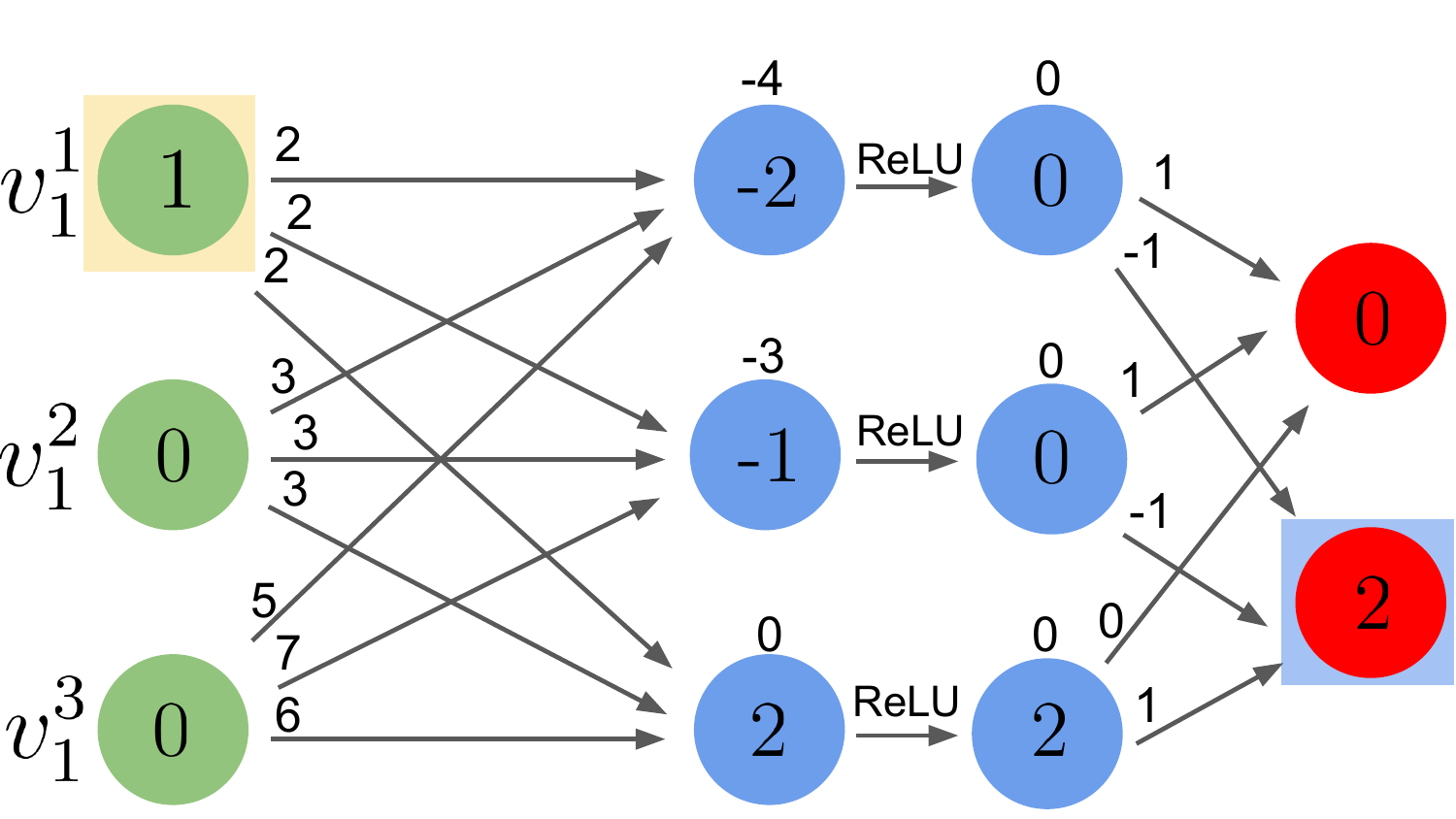}}
		\caption{$\{ v_1^1, v_1^2 \}$ is a minimal explanation for input $V_1=[1,1,1]^T$.}
		\label{fig:neural_network_minimal_explanation}
	\end{figure}
	
	A \emph{minimum explanation} (sometimes referred to as a \emph{cardinal
		minimal explanation} or a \emph{PI-explanation}) is defined as a
	minimal explanation of minimum size; i.e., if $E$ is a minimum
	explanation, then there does not exist a minimal explanation
	$E' \neq E$ such that $|E'|<|E|$.
	Fig.~\ref{fig:neural_network_minimum_explanation} demonstrates that $\{
	v_1^3 \} $ is a minimum explanation for our running example.
	
	\begin{figure}
		\centering
		{\includegraphics[width=0.325\textwidth]{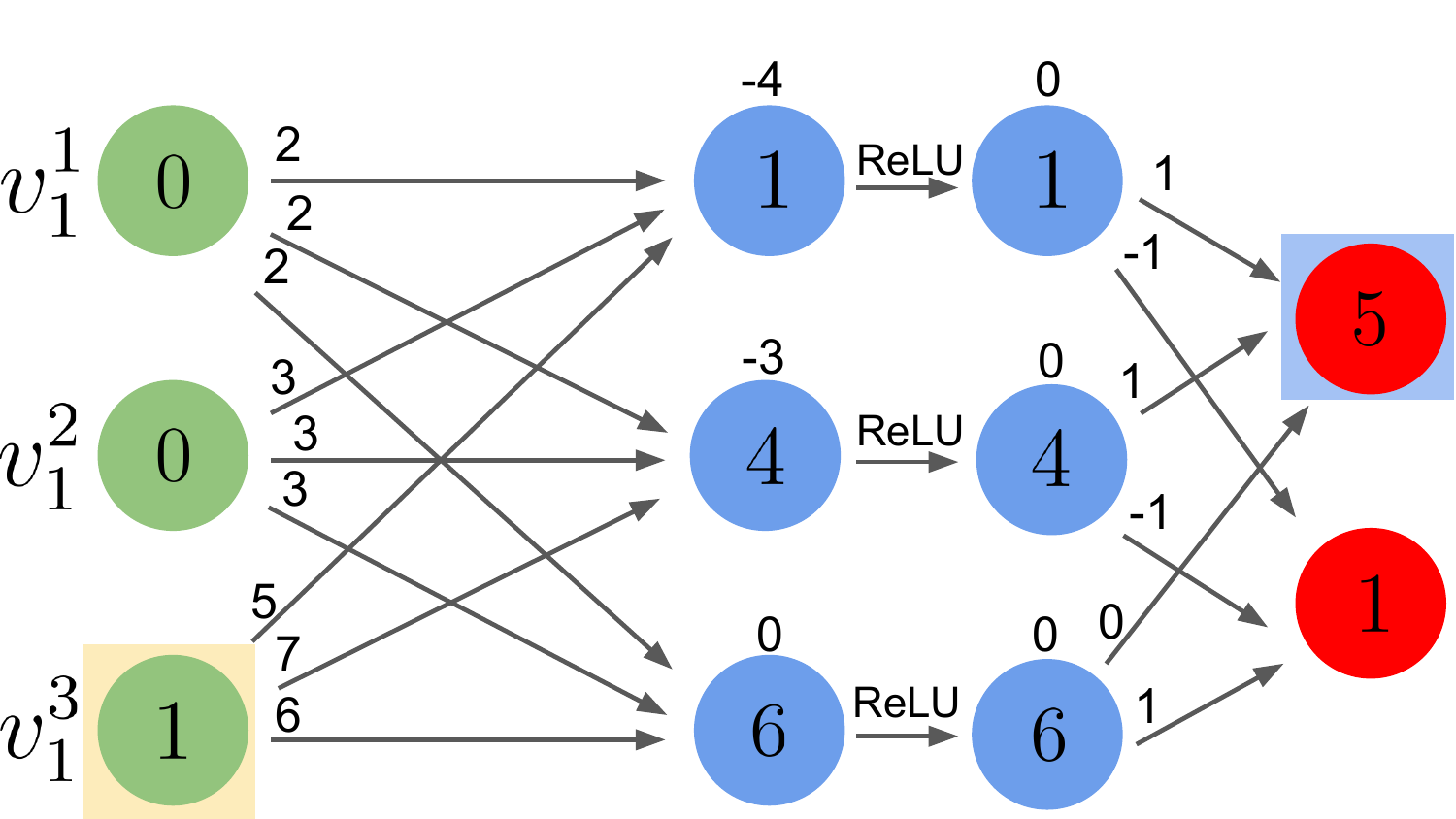}}
		\hfill
		{\includegraphics[width=0.325\textwidth]{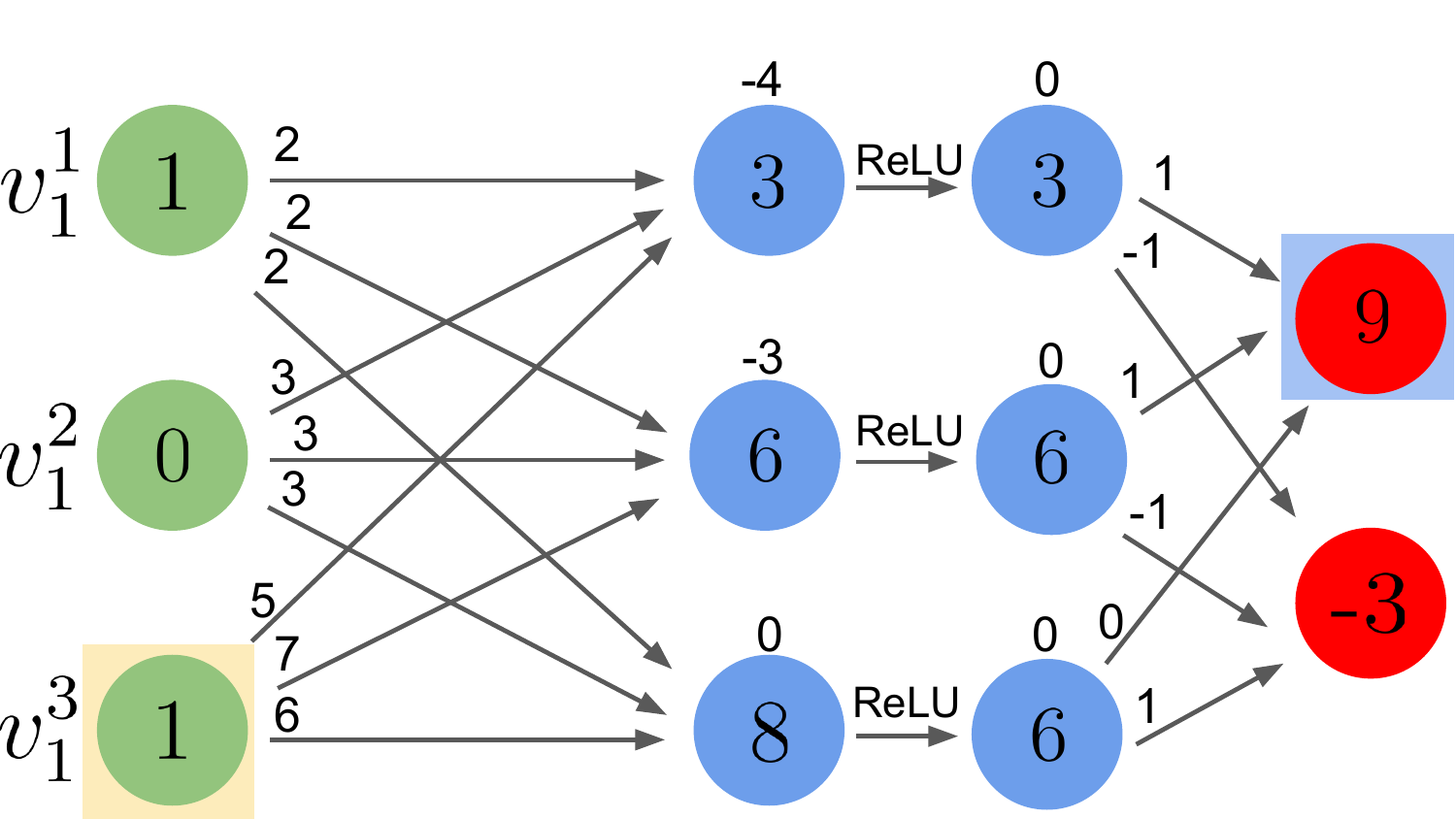}}
		\hfill
		{\includegraphics[width=0.325\textwidth]{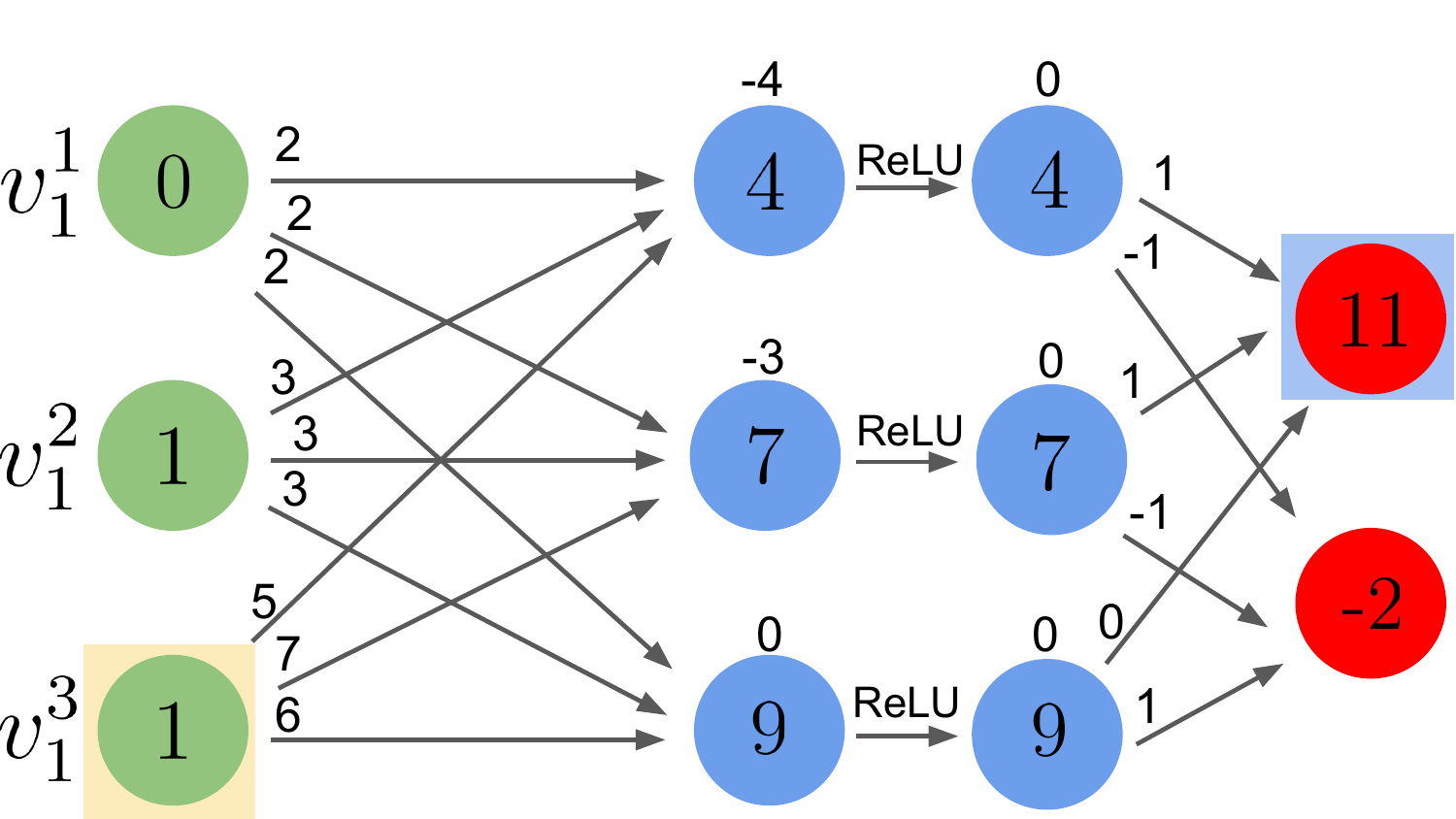}}
		\caption{$\{ v_1^3 \} $ is a minimum explanation for input $V_1=[1,1,1]^T$.}
		\label{fig:neural_network_minimum_explanation}
	\end{figure}
	
	\mysubsection{Contrastive Example.}
	\label{sec:contrastive-examples}
	A subset of features $C\subseteq F$ is called a \emph{contrastive
		example} or a \emph{contrastive explanation (CXP)} if
	altering the features in $C$ is sufficient to cause the
	misclassification of a given classification instance $(v,c)$:
	\begin{equation}
		\label{eq:contrastive_examples}
		\exists(x\in \mathbb{F}).[\wedge_{i\in F\setminus C}(x_{i}=v_{i})\wedge(N(x)\neq c)]
	\end{equation}
	\begin{wrapfigure}{r}{0.45\textwidth}
		\vspace{-0.8cm}
		\begin{center}
			\includegraphics[width=0.45\textwidth]{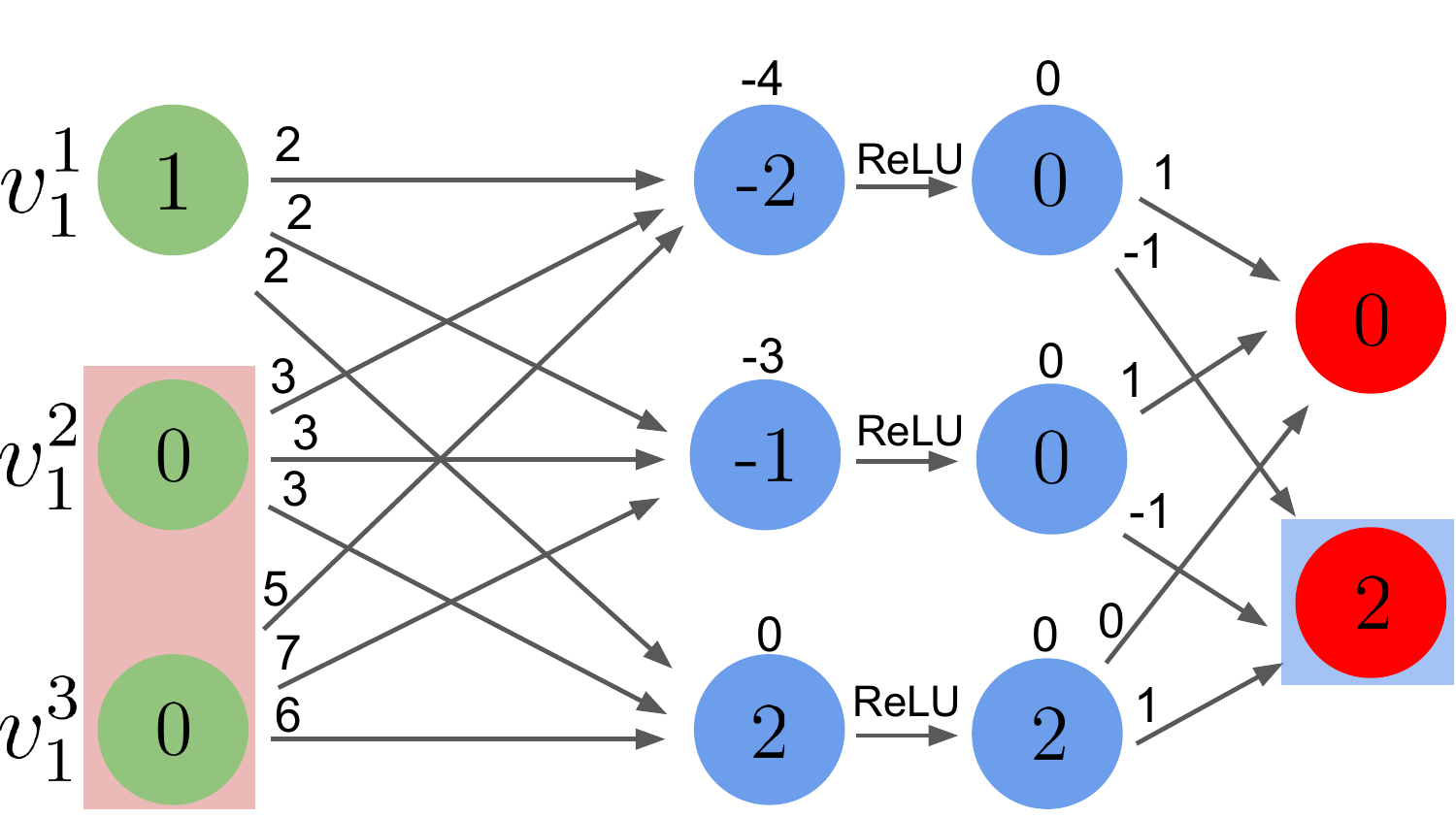}
			\caption{$\{ v_1^2, v_1^3 \}$ is a contrastive example for $V_1=[1,1,1]^T$.}
			\label{fig:neural_network_contrastive_example}
		\end{center}
		\vspace{-1.2cm}  
	\end{wrapfigure}
	A contrastive example for our running example is shown in
	Fig.~\ref{fig:neural_network_contrastive_example}. Notice that
	the question of whether a set is a contrastive example can be
	encoded into a verification query
	$\langle P,N,Q\rangle = \langle(F\setminus C)=v,N,Q_{\neg
		c}\rangle$,
	where a \sat{} result indicates that $C$ is a contrastive
	example.
	As with explanations, smaller contrastive examples are
	more valuable than large ones. One useful
	notion is that of a \emph{contrastive singleton}: a contrastive example of
	size one. A contrastive singleton could represent a specific pixel in an image,
	the alteration of which could result in misclassification. Such
	singletons are leveraged in ``one-pixel attacks''~\cite{su2019one} (see
	Fig.~\ref{fig:one-pixel-attack} in the appendix).
	Contrastive singletons have the following important property:
	\begin{lemma}
		\label{singelton_in_all_explanations}
		Every contrastive singleton is contained in all explanations.
	\end{lemma}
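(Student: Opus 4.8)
The plan is to argue by contradiction, playing the defining property of a contrastive singleton against the defining property of an explanation. Suppose $C=\{j\}$ is a contrastive singleton for the instance $(v,c)$, and suppose, for the sake of contradiction, that there is some explanation $E\subseteq F$ with $j\notin E$. The goal is to exhibit a single input on which the two definitions disagree.

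First I would unfold the contrastive-singleton condition, Eq.~\eqref{eq:contrastive_examples}, with $C=\{j\}$: it guarantees the existence of an input $x\in\mathbb{F}$ such that $x_i=v_i$ for all $i\in F\setminus\{j\}$ and $N(x)\neq c$. Next I would observe that, since $j\notin E$, we have $E\subseteq F\setminus\{j\}$, so the same $x$ in particular satisfies $x_i=v_i$ for every $i\in E$. Then I would invoke the explanation condition, Eq.~\eqref{eq:explanation}, applied to this $x$: because $E$ is an explanation and $x$ agrees with $v$ on all of $E$, we must have $N(x)=c$. This directly contradicts $N(x)\neq c$ obtained from the contrastive-singleton property.

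Hence no explanation can omit the feature $j$, i.e.\ every explanation contains the contrastive singleton $\{j\}$, which is exactly the claim.

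There is essentially no serious obstacle here — the argument is a one-line set-inclusion observation ($E\subseteq F\setminus\{j\}$) combined with instantiating both quantified formulas at the same witness $x$. The only point requiring a little care is making sure the witness $x$ supplied by Eq.~\eqref{eq:contrastive_examples} is a legitimate element of $\mathbb{F}$ on which Eq.~\eqref{eq:explanation} may be instantiated, which it is by definition. The result also extends verbatim to contrastive sets of any size (every contrastive set intersects every explanation), but the singleton case is all that is needed for the stated lemma.
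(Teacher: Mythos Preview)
Your proposal is correct and follows essentially the same approach as the paper: assume for contradiction that some explanation $E$ omits the singleton, deduce $E\subseteq F\setminus\{j\}$, and use the witness $x$ from Eq.~\eqref{eq:contrastive_examples} to contradict Eq.~\eqref{eq:explanation}. Your write-up is in fact slightly more explicit than the paper's, which compresses the last two steps into a single line.
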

	The proof appears in Sec.~\ref{sec:appendix_explanations_contrastive_proofs} of the appendix.  Lemma~\ref{singelton_in_all_explanations} implies that each contrastive
	singleton is contained in all minimal/minimum
	explanations.
	
	We consider also the notion of a \emph{contrastive pair}, which is a 
	contrastive example of size 2. Clearly, for any pair of features $(u,v)$ where
	$u$ or $v$ are contrastive singletons, $(u,v)$ is a contrastive
	pair; however, when we next refer to contrastive pairs, we
	consider only pairs that \emph{do not} contain any contrastive
	singletons. Likewise, for every $k>2$, we can consider contrastive
	examples of size $k$, and we exclude from these any contrastive examples of sizes $1,\ldots,k-1$ as subsets.
	
	We state the following theorem, whose proof also
	appears in Sec.~\ref{sec:appendix_explanations_contrastive_proofs} of the appendix:
	\begin{lemma}
		\label{one_element_of_pair}
  	All explanations contain at least one element of every contrastive pair.
	\end{lemma}
	The theorem can be generalized to any $k>2$; and can be used in
	showing that the \emph{minimum hitting set (MHS)} of all
	contrastive examples is exactly the minimum
	explanation~\cite{ignatiev2016propositional, reiter1987theory}
        (see
	Sec.~\ref{mhs-defintion-appendix} of the appendix). Further, the
	theorem implies a duality between contrastive examples and
	explanations~\cite{ignatiev2020contrastive, ignatiev2015smallest}: a
	minimal hitting set of all contrastive examples constitutes a minimal
	explanation, and a minimal hitting set of all explanations
	constitutes a minimal contrastive example.
	\section{Provable Approximations for Minimal Explanations}
	\label{sec:approximations}
	State-of-the-art approaches for finding minimum explanations
	exploit the MHS duality between explanations and contrastive 
	examples~\cite{ignatiev2019abduction}. The idea is to
	iteratively compute contrastive examples, and then use their
	MHS as an under-approximation for the minimum
	explanation. Finding this MHS is an NP-complete problem, and
	is difficult in practice as the number of contrastive examples
	increases~\cite{gainer2017minimal}; and although the MHS can
	be approximated using maximum satisfiability (MaxSAT) or mixed
	integer linear programming (MILP) solvers~\cite{li2021maxsat,
		ilog2018cplex}, existing approaches tackle simpler ML
	models, such as decision trees~\cite{izza2020explaining,
		ignatiev2018sat}, but face scalability limitations when
	applied to DNNs~\cite{ignatiev2019abduction,
		shi2020tractable}.  Further, enumerating all contrastive
	examples may in itself take exponential time. Finally, recall
	that DNN verification is an NP-Complete
	problem~\cite{katz2017reluplex}; and so dispatching a
	verification query to identify each explanation or
	contrastive example is also very slow, when the feature space
	is large. Finding \emph{minimal} explanations may be
	easier~\cite{ignatiev2019abduction}, but may converge to
	larger and less meaningful explanations, while still requiring
	a linear number of calls to the underlying verifier.  Our
	approach, described next, seeks to mitigate these
	difficulties.
	
	Our overall approach is described in
	Algorithm~\ref{alg:approximation}. It is comprised of two
	separate threads, intended to be run in parallel. The
	\emph{upper bounding thread} (\tub) is responsible for
	computing a minimal explanation. It starts with the
	entire feature space, and then gradually reduces it, until
	converging to a minimal explanation. The size of the presently
	smallest explanation is regarded as an upper bound (\upperb)
	for the size of the minimum explanation. Symmetrically, the
	\emph{lower bounding thread} (\tlb) attempts to construct
	small contrastive sets, used for computing a lower bound
	(\lowerb) on the size of the minimum explanation. Together,
	these two bounds allow us to compute the approximation ratio
	between the minimal explanation that we have discovered and
	the minimum explanation. For instance, given a minimal
	explanation of size 7 and a lower bound of size 5, we can
	deduce that our explanation is at most
	$\frac{\upperb}{\lowerb}=\frac{7}{5}$ times larger than the
	minimum. The two threads
	share global variables that indicate the set of
	contrastive singletons (\singletons), the set of contrastive
	pairs (\pairs), the upper and lower bounds (\upperb, \lowerb),
	and the set of features that were determined not to
	participate in the explanation 
	and are ``free'' to be set to any value (\free). The
	output of our algorithm is a minimal explanation
	(F$\setminus$Free), and the approximation ratio
	($\frac{\upperb}{\lowerb}$). We next discuss each of the two threads in
	detail.
	
	\begin{algorithm}
		\caption{Minimal Explanation Search}\label{alg:approximation}
		\textbf{Input} N (Neural network), F (features), v (input values), c (class prediction)
		\begin{algorithmic}[1]
			\State \singletons, \pairs, \free $\gets \emptyset$, \upperb $\gets |F|$, \lowerb$\gets0$ \Comment{Global variables}
			\State Launch thread \tub
			\State Launch thread \tlb
			\State \Return F$\setminus$Free, $\frac{\upperb}{\lowerb}$
			
		\end{algorithmic}
	\end{algorithm}
	
	\mysubsection{The Upper Bounding Thread ($T_\upperb$).}  This thread, whose pseudocode
	appears in Algorithm~\ref{alg:upper-thread}, follows the 
	framework proposed by Ignatiev et
	al.~\cite{ignatiev2019abduction}: it seeks a
	minimal explanation by starting with the entire feature space, and
	then iteratively attempting to remove individual features. If removing
	a feature allows misclassification, we keep it as
	part of the explanation; otherwise, we
	remove it and continue. This process
	issues a single verification query for each feature, until converging
	to a minimal explanation (lines \ref{lst:line:startregularupper}--\ref{lst:line:endregularupper}). Although this na\"ive search is
	guaranteed to converge to a minimal explanation, it needs not to converge
	to a \emph{minimum} explanation; and so we apply a more sophisticated
	ordering scheme, similar to the one proposed
	by~\cite{ignatiev2019validating}, where we use some heuristic
	model as a way for assigning weights of importance to each
	input feature. We then check the ``least
	important'' input features first, since freeing them has a lower
	chance of causing a misclassification, and they are consequently
	more likely to be successfully removed. We then continue
	iterating over features in ascending order of importance, hopefully
	producing small explanations.
	
	\begin{algorithm}
		\algnewcommand\algorithmicforeach{\textbf{for each}}
		\algdef{S}[FOR]{ForEach}[1]{\algorithmicforeach\ #1\ \algorithmicdo}
		\caption{\tub: Upper Bounding Thread}\label{alg:upper-thread}
		\begin{algorithmic}[1]
			\State{Use a heuristic model to sort $F$'s features by ascending relevance}
			\ForEach {$f \in F$}\label{lst:line:startregularupper}
			\State \explanation$\gets\ $F$\setminus$Free
			\If{\verifyexplanation is \unsat{}}
			\State{\free$\gets\ $\freewithf}
			\State{$\upperb \gets \upperb-1$}
			\EndIf
			\EndFor\label{lst:line:endregularupper}
		\end{algorithmic}
	\end{algorithm}
	\mysubsection{The Lower Bounding Thread (\tlb).}
	\label{lower-code-section}
	The pseudocode for the lower bounding thread (\tlb) appears in
	Algorithm~\ref{alg:lower-thread}. In lines
	\ref{lst:line:lowersingletonstart}--\ref{lst:line:lowersingletonend},
	the thread searches for contrastive singletons.  Neural
	networks were shown to be very sensitive to adversarial
	attacks~\cite{huang2017adversarial} --- slight input
	perturbations that cause misclassification (e.g., the
	aforementioned one-pixel attack~\cite{su2019one}) --- and this
	suggests that contrastive sets, and in particular contrastive
	singletons, exist in many cases.  We observe that identifying
	contrastive singletons is computationally cheap: by encoding
	Eq.~\ref{eq:contrastive_examples} as a verification query,
	once for each feature, we can discover all singletons; and in
	these queries all features but one are fixed, which
	empirically allows verifiers to dispatch them quickly.
	
	\begin{algorithm}
		\algnewcommand\algorithmicforeach{\textbf{for each}}
		\algdef{S}[FOR]{ForEach}[1]{\algorithmicforeach\ #1\ \algorithmicdo}
		\caption{$T_{LB}$: Lower Bounding Thread}\label{alg:lower-thread}
		\begin{algorithmic}[1]
			
			\ForEach {$f \in F$} \Comment{Find all singletons}\label{lst:line:lowersingletonstart}
			\If{\verifysingleton is \sat{}}
			\State{\singletons$\gets\ $\singletons $\cup\ \{f\}$}
			\State{\lowerb $\gets$ \lowerb $+1$}
			\EndIf
			\EndFor\label{lst:line:lowersingletonend}
			\\
			\State{\allpairs $\gets$ \allpairsinput}
			\ForEach{\singlepair $\in$ \allpairs} \Comment{Find all pairs}\label{lst:line:lowerpairsstart}
			\If{\verifypair is \sat{}}
			\State{\pairs $\leftarrow$\pairs $\cup\ \{(a,b)\}$}
			\EndIf
			\EndFor\label{lst:line:lowerpairsend}
			\State{\lowerb $\gets$ \lowerb $+$ \mvc}
			
		\end{algorithmic}
	\end{algorithm}
 
	The rest of \tlb{}  (lines \ref{lst:line:lowerpairsstart}--\ref{lst:line:lowerpairsend}) performs
	a similar process, but with contrastive pairs (which do not contain
	contrastive singletons as one of their features). We use verification
	queries to identify all such pairs, and then attempt to find their
	MHS.  We observe that finding the MHS of all contrastive pairs is
	the 2-MHS problem, which is a reformalization of the
	\emph{minimum vertex cover} problem (see Sec.~\ref{mhs-defintion-appendix} of the appendix).  Since this is an easier problem than the general MHS
	problem, solving it with MAX-SAT or MILP often converges 
	quickly. In addition, the minimum vertex cover algorithm has a linear
	2-approximating greedy algorithm, which can be used for finding a
	lower bound in cases of large feature spaces.
	
	More formally, \tlb performs an efficient
	computation of the following bound:
	\begin{equation}
		\begin{aligned}
			\lowerb = |\singletons|+|\mvc|\leq\mhscxp=\minimumexplanation
		\end{aligned}
	\end{equation}
	where MVC is the minimum vertex cover, Cxps denotes the set of all
	contrastive examples, and \minimumexplanation is the size of the minimum
	explanation.
	
	It is worth mentioning that this approach can be extended to use
	contrastive examples of larger sizes ($k=3,4,\ldots$), as
        specified in Sec.~\ref{extending-tlb-appendix} of the appendix. The fact that small contrastive examples, such as singletons, exist in large, state-of-the-art DNNs with large
    inputs~\cite{su2019one, garg2020bae} suggests
    that useful approximations exist in large DNNs. In our
	experiments, we observed that using only singletons and pairs
	affords good approximations, without incurring overly expensive
	computations by the underlying verifier.

	\section{Finding Minimal Explanations Efficiently}
	\label{heuristics_local_search}
	
	Algorithm~\ref{alg:approximation} is the backbone of our
	approach, but it suffers from limited scalability ---
	particularly, in \tub{}. As the
	execution of \tub{} progresses, and as additional features are
	``freed'', the quickly growing search space slows down the
	underlying verifier. Here we propose three different methods
	for expediting this process, by reducing the number of
	verification queries required.
	
	\mysubsection{Method 1: Using Information from \tlb{}.}  We
	suggest to leverage the contrastive examples found by \tlb{}
	to expedite \tub{}. The process is described in
	Algorithm~\ref{alg:upper-thread-lower-information}. In line
	\ref{lst:line:lowerinfosingletons}, \tlb{} is queried for
	the current set of contrastive singletons, which we know must
	be part of any minimal explanation. These are subtracted
	from the \featuresleft set (features left for \tub{} to
	query), and consequently will not be added to the
	\free set --- i.e., they are 
	marked as part of the current explanation. In addition, for
	any contrastive pair $(a,b)$ found by \tlb{},
	either $a$ or $b$ must appear in any minimal explanation;
	and so, our algorithm skips checking the case where both $a$
	and $b$ are removed from F (Line~\ref{lst:line:lowerinfopairs}). (the method could also be
	extended to contrastive sets of greater cardinality.)
	
	\begin{algorithm}
		\algnewcommand\algorithmicforeach{\textbf{for each}}
		\algdef{S}[FOR]{ForEach}[1]{\algorithmicforeach\ #1\ \algorithmicdo}
		\caption{\tub using information from \tlb}\label{alg:upper-thread-lower-information}
		\begin{algorithmic}[1]
			\State{Use a heuristic model to sort $F$ by ascending relevance}
			\State \featuresleft $\gets\ $F$\setminus$\singletons
			\ForEach{f $\in$ \featuresleft}\label{lst:line:lowerinfosingletons}
			\State \explanation$\gets\ $F$\setminus$Free
			\If{\verifyexplanation is \unsat{}}
			\State{\free$\gets\ $\freewithf}
			\State{$\upperb \gets \upperb-1$}
			\State{Delete all features in a pair with f from \featuresleft}\label{lst:line:lowerinfopairs}
			\EndIf
			\EndFor
		\end{algorithmic}
	\end{algorithm}
	
	\mysubsection{Method 2: Binary Search.}
	Sorting the features being considered
	in ascending order of importance can have a
	significant effect on the size of the explanation found by Algorithm~\ref{alg:upper-thread}.  Intuitively, a ``perfect''
	heuristic model would assign the greatest weights to  all
	features in the minimum explanation, and so traversing features in
	ascending order 
	would first discover all the features that can be removed (\unsat{}
	verification queries), followed by all
	the features that belong in the explanation (\sat{} queries).
	In this case, a sequential traversal of the features in ascending
	order is quite wasteful, and it is much better to perform a binary
	search to find the point where the answer flips from \unsat{}
	to \sat{}. 
	
	
	Of course, in practice, the heuristic models are not perfect,
	leading to potential cases with multiple ``flips'' from \sat{}
	to \unsat{}, and vice versa. Still, if the heuristic is good
	in practice (which is often the case; see
	Sec.~\ref{sec:Evaluation}), these flips are scarce. Thus, we
	propose to perform multiple binary searches, each time
	identifying one \sat{} query (i.e., a feature added to the
	explanation). Observe that each time we hit an \unsat{} query,
	this indicates that all the queries for features with
	lower priorities would also yield \unsat{} --- because if
	``freeing'' multiple features cannot change the
	classification, changing fewer features certainly
	cannot. Thus, we are guaranteed to find the first \sat{} query
	in each iteration, and soundness is maintained. This process is
	described in Algorithm.~\ref{alg:upper-thread-binary-search} and in Fig.\ref{fig:binary-search} in the appendix.

	\mysubsection{Method 3: Local-Singleton Search.}  Let $N$ be a
	DNN, and let $x$ be an input point whose classification we
	seek to explain. As part of Algorithm~\ref{alg:upper-thread},
	\tub{} iteratively ``frees'' certain input features,
	allowing them to take arbitrary values, as it continues to
	search for features that must be included in the
	explanation. The increasing number of free features enlarges
	the search space that the underlying verifier must traverse,
	thus slowing down verification.  We propose to leverage the
	hypothesis that input points nearby $x$ that are misclassified
	tend to be clustered; and so, it is beneficial to
	\emph{fix} the free features to ``bad'' values, as opposed to
	letting them take on arbitrary values. We speculate that this
	will allow the verifier to discover satisfying
	assignments much more quickly.
	
	This enhancement is shown in
	Algorithm~\ref{alg:upper-thread-local-singelton-search}. Given
	a set \free of features that were previously freed, we fix
	their values according to some satisfying assignment
	previously discovered. Thus, the verification of any new
	feature that we consider is similar to the case of searching
	for contrastive singletons, which, as we already know, is
	fairly fast. See Fig.~\ref{fig:local-singelton-search} in the appendix for an illustration. The process can be improved further by
	fixing the freed features to small neighborhoods of the
	previously discovered satisfying assignment (instead of its
	exact values), to allow some flexibility while still keeping
	the query's search space small.
	
	\begin{algorithm}[!htp]
		
		\algnewcommand\algorithmicforeach{\textbf{for each}}
		\algdef{S}[FOR]{ForEach}[1]{\algorithmicforeach\ #1\ \algorithmicdo}
		\caption{\tub using local-singleton search}\label{alg:upper-thread-local-singelton-search}
		\begin{algorithmic}[1]
			\State{Use a heuristic model to sort $F$ by ascending relevance}
			\State \featuresleft $\gets\ $F$\setminus$\singletons
			\ForEach {f $\in$ \featuresleft}
			\State \explanation$\gets\ $F$\setminus$Free
			\If{\verifyexplanation is \unsat{}}
			\State{\free$\gets\ $\freewithf}
			\State{$\upperb \gets \upperb-1$}
			\Else
			\State {Extract counter example C}
			\State \localsingletons $\gets \emptyset$
			\ForEach {$f^\prime$ $\in$ \featuresleft}
			\If{\verifylocalsingletons is \sat{}}
			\State {\localsingletons $\gets$ \localsingletons$\cup\ \{f^\prime\}$}
			\EndIf
			\EndFor
			\State \featuresleft $\gets$
			\featuresleft$\setminus\ $\localsingletons
			\EndIf
			\EndFor
		\end{algorithmic}
	\end{algorithm}

	\section{Minimal Bundle Explanations}
	
	\label{sec:bundles}
	

\begin{wrapfigure}[11]{r}{0.2\textwidth}
	\vspace{-2.2cm}
	\begin{center}
		\includegraphics[width=0.2\textwidth]{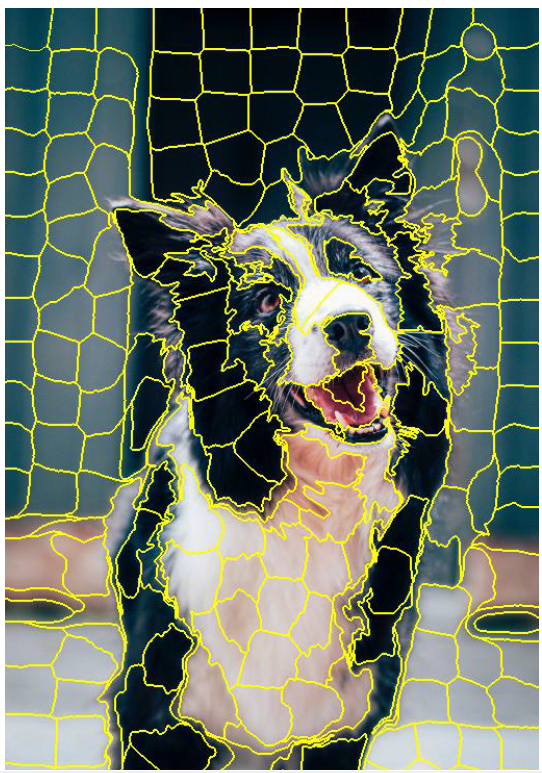}
		\caption{Partition input's features into bundles.}
		\label{fig:bundle2}
	\end{center}
\end{wrapfigure}

So far, we presented methods for generating explanations within a
given approximation ratio of the minimum explanation
(Sec.~\ref{sec:approximations}), and for expediting the computation of
these explanations (Sec.~\ref{heuristics_local_search}) --- in order
to improve the scalability of our explanation generation
mechanism. Next, we seek to tackle the second challenge from
Sec.~\ref{sec:Introduction}, namely that these explanations may be too
low-level for many users. To address this challenge, we focus on
\emph{bundles}, which is a topic well covered in the
ML~\cite{stutz2018superpixels} and heuristic XAI
literature~\cite{ribeiro2016should, lundberg2017unified} (commonly known as ``super-pixels'' for computer-vision tasks).
Intuitively, bundles are a partitioning of the features into disjoint
sets (an illustration appears in Fig.~\ref{fig:bundle2}). The idea,
which we later validate empirically, is that providing explanations in
terms of bundles is often easier for humans to comprehend. As an added
bonus, using bundles also curtails the search space that the
verifier must traverse, expediting the process even
further.


Given a feature space $F=\{1,...,m\}$, a bundle $b$ is just a
subset $b\subseteq F$. When dealing with the set of all bundles
$B=\{b_{1},b_{2},...b_{n}\}$, we require that they form a partitioning
of $F$, namely $F= \cupdot b_i$.  We define a \emph{bundle
	explanation} $E_B$ for a classification instance $(v,c)$ as a subset
of bundles, $E_B\subseteq B$, such that:
\begin{equation}
	\label{eq:bundle_explanation}
	\forall(x\in \mathbb{F}).[\wedge_{i\in \cup E_B}(x_{i}=v_{i})\to(N(x)=c)]
\end{equation}
The following theorem then connects bundle explanations
and explicit, non-bundle explanations:
\begin{theorem}
	\label{bundle_no_bundle_explanation}
	The union of features in a bundle explanation is an explanation. 
\end{theorem}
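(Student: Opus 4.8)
The plan is to treat the statement as an unfolding of definitions, with the set $\cup E_B$ playing exactly the role of the feature subset in the definition of an explanation. First I would fix an arbitrary bundle explanation $E_B \subseteq B$ and set $E := \bigcup_{b \in E_B} b = \cup E_B$. Since every bundle is by definition a subset of $F$, the union $E$ is again a subset of $F$, so $E$ is a legitimate candidate for being an explanation in the sense of Eq.~\eqref{eq:explanation}.

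Next I would place the defining condition of a bundle explanation, Eq.~\eqref{eq:bundle_explanation}, side by side with the defining condition of an explanation, Eq.~\eqref{eq:explanation}. The index set appearing in the conjunction of Eq.~\eqref{eq:bundle_explanation} is precisely $\cup E_B = E$, and both conditions quantify over all $x \in \mathbb{F}$, so the two formulas
\[
\forall (x \in \mathbb{F}).\ \bigl[\textstyle\bigwedge_{i \in \cup E_B}(x_i = v_i) \to (N(x) = c)\bigr]
\qquad\text{and}\qquad
\forall (x \in \mathbb{F}).\ \bigl[\textstyle\bigwedge_{i \in E}(x_i = v_i) \to (N(x) = c)\bigr]
\]
are literally the same statement. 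Since $E_B$ is assumed to satisfy the former, $E$ satisfies the latter, which is exactly the requirement for $E$ to be an explanation of $(v,c)$; this completes the argument.

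There is essentially no hard step here: the only things to check are that $\cup E_B$ is well-defined and contained in $F$ (immediate, since each bundle is a subset of $F$) and that the quantifier prefixes match (immediate). I would also remark that the partition requirement $F = \cupdot b_i$ is \emph{not} used for this direction; it becomes relevant only when one wants to compare the \emph{size} of minimal or minimum bundle explanations to that of ordinary explanations, or to argue that coarsening an ordinary explanation to whole bundles yields a bundle explanation — neither of which is claimed here.
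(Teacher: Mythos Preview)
Your proposal is correct and matches the paper's own justification, which simply states that the proof follows directly from Eqs.~\eqref{eq:explanation} and~\eqref{eq:bundle_explanation}. You have merely spelled out that observation in more detail (and your side remark that the partition hypothesis is unused here is accurate).
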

The proof directly follows from Eqs.~\ref{eq:explanation}
and~\ref{eq:bundle_explanation}.
We note that this definition of bundles implies that features that are
not part of the bundle explanation (i.e. features contained in
\emph{``free'' bundles}) are ``free'' to be set to any possible
value. Another possible alternative for defining bundles could be to
allow features in ``free'' bundles to only change in the same,
coordinated manner. We focus here on the former definition, and leave
the alternative definition for future work. 

Many of the aforementioned results and definitions for
explanations can be extended to bundle explanations. In
a similar manner to Eq.~\ref{eq:bundle_explanation}, we can
define the notions of minimal and minimum bundle
explanations, a contrastive bundle singleton, and contrastive
bundle pairs (see
Sec.~\ref{minimal-bundle-explanation-appendix} of the appendix). Theorems~\ref{singelton_in_all_explanations}
and~\ref{one_element_of_pair} can be extended to bundle
explanations in a straightforward manner. It then follows that
all bundle explanations contain all contrastive singleton bundles, and that 
all bundle explanations contain at least one bundle of any contrastive bundle pair.

Our method from Secs.~\ref{sec:approximations} and~\ref{heuristics_local_search} can be similarly performed on bundles rather than on features, and \tub would then be used for calculating a minimal bundle explanation, rather than a minimal explanation. Regarding the aforementioned approximation ratio,
we discuss and evaluate two different methods for obtaining it. The first, natural approach is to apply
our techniques from Sec.~\ref{sec:approximations} on bundle
explanations, thus obtaining a provable approximation for a
\emph{minimum bundle explanation}. The upper bound is trivially derived by the size of the bundle explanation found by \tub, whereas the lower bound calculation requires assigning a cost to each bundle,
representing the number of features it contains. This is
done via a known notion of \emph{minimum hitting sets of
	bundles (MHSB)~\cite{angel2009minimum}} and using minimum
\emph{weighted} vertex cover for the approximation of
contrastive bundle pairs. This method, which is almost
identical to the one mentioned in
Sec.~\ref{sec:approximations}, is formalized in
Sec.~\ref{sec:bundle_approximation_appendix} of the appendix.

%
%
The second approach is to calculate an approximation ratio with respect to a regular, non-bundle
minimum explanation. The minimal bundle explanation found by \tub is
an upper bound on the minimum non-bundle explanation following theorem
\ref{eq:bundle_explanation}. For computing a lower bound,
we can analyze contrastive bundle
examples; extract from them contrastive non-bundle
examples; and then use the duality property, compute an MHS of these
contrastive examples, and derive lower bounds for the size of the
minimum explanation. We formalize techniques for performing this
calculation in Sec.~\ref{sec:bundle_approximation_appendix} of the appendix.

\section{Evaluation}
\label{sec:Evaluation}

\mysubsection{Implementation and Setup.}
For evaluation purposes, we created a proof-of-concept implementation
of our approach as a Python framework.
Currently, the framework uses the Marabou verification
engine~\cite{katz2019marabou} as a backend, although other engines may
be used. Marabou is a Simplex-based DNN verification framework that is
sound and complete~\cite{katz2019marabou, WuOzZeIrJuGoFoKaPaBa20,
	katz2017reluplex, WuZeKaBa22, AmWuBaKa21, KaBaDiJuKo21,amir2022verifire}, and which
includes support for proof production~\cite{IsBaZhKa22}, abstraction~\cite{WaPeWhYaJa18, SiGePuVe19, ElGoKa20, OsBaKa22,
	ZeWuBaKa22, ElCoKa22}, and optimization~\cite{StWuZeJuKaBaKo21}; and has
been used in various settings, such as ensemble
selection~\cite{AmKaSc22}, simplification~\cite{LaKa21,
	GoFeMaBaKa20} repair~\cite{ReKa22, GoAdKeKa20}, and verification of
reinforcement-learning based systems~\cite{AmCoYeMaHaFaKa22, AmScKa21,
	ElKaKaSc21}.
For sorting features by their relevance, we used the popular
XAI method LIME~\cite{ribeiro2016should}; although again, other
heuristics could be used. The MVC was calculated using the classic
2-approximating greedy algorithm.  All experiments reported were
conducted on x86-64 Gnu/Linux-based machines, using a single Intel(R)
Xeon(R) Gold 6130 CPU @ 2.10GHz core, with a 1-hour timeout.

\mysubsection{Benchmarks.}
As benchmarks, we used DNNs trained over the MNIST dataset for
handwritten digit recognition~\cite{lecun1998mnist}. These networks
classify $28\times 28$ grayscale images into the digits
$0,\ldots,9$. Additionally, we used DNNs trained over the
Fashion-MNIST dataset~\cite{xiao2017fashion}, which classify
$28\times 28$ grayscale images into 10 clothing categories
(``Dress'', ``Coat'', etc.)
For each of these datasets we
trained a DNN with the following architecture:
\begin{inparaenum}[(i)]
	\item an input layer (which corresponds to the image) of size 784;
	\item a fully connected hidden layer with 30 neurons;
	\item another fully connected hidden layer, with 10 neurons; and
	\item a final, softmax layer with 10 neurons, corresponding to the 10
	possible output classes.
\end{inparaenum}
The accuracy of the MNIST DNN was 96.6\%, whereas that of the
Fashion-MNIST DNN was 87.6\%. (We note that we configured LIME to
ignore the external border pixels of each input, as
these are not part of the actual image.)

In selecting the classification instances to be explained for
these networks, we targeted input points where the network was
not confident --- i.e., where the winning label did not win by
a large margin. The motivation for this choice is that
explanations are most useful and relevant in cases where the
network's decision is unclear, which is reflected in
lower confidence scores. Additionally, explanations of instances with
lower confidence tend to be larger, facilitating the process of extensive experimentation.
We thus selected the 100 inputs from the MNIST and the
Fashion-MNIST datasets where the networks demonstrated the
lowest confidence scores --- i.e., where the difference between the winning output score and the runner-up class score was minimal. 

\mysubsection{Experiments.}  Our first goal was to compare our
approach to that of Ignatiev et
al.~\cite{ignatiev2019abduction}, which is the current state
of the art in verification-based explainability of
DNNs. Other approaches consider other ML types, such as descision trees~\cite{izza2020explaining, ignatiev2018sat}, or focus on alternative definitions for abductive explanations~\cite{wu2022verix, la2021guaranteed} and are thus not comparable. Because the implementation used 
in~\cite{ignatiev2019abduction} is unavailable, we implemented
their approach, using Marabou as the underlying verifier for a
fair comparison. In addition, we used the same heuristic
model, LIME, for sorting the input features' relevance.
Fig.~\ref{fig:final_basic_results} depicts a comparison of the
two approaches, over the MNIST benchmarks. The Fashion-MNIST
results were similar, but since the Fashion-MNIST network had lower
accuracy it tended to produce larger explanations with lower
run-times, resulting in less meaningful evaluations (due to space
limitations, these results appear in Fig.~\ref{fig:final_basic_results_appendix_fashion} in the
appendix).  We compared the
approaches according to two criteria: the portion of input features
whose participation in the explanation was verified, over time (part (a) of
Fig.~\ref{fig:final_basic_results}), and the average size of the presently obtained
explanation over time, also presented as a fraction of the
total number of input features
(part (b)). The results indicate that
our method significantly improves over the state of the art, verifying
the participation of 40.4\% additional features, on average, and producing explanations that are 9.7\% smaller, on average, at the end of the 1-hour time limit. Furthermore,
our method timed out on 10\% fewer benchmarks. We regard this
as compelling evidence of the potential of our approach to produce more efficient verification-based XAI.

\begin{figure}[!tbp]
	\centering
	\subfloat[Average portion of features verified to participate
	in the explanation.]{\includegraphics[width=0.43\textwidth]{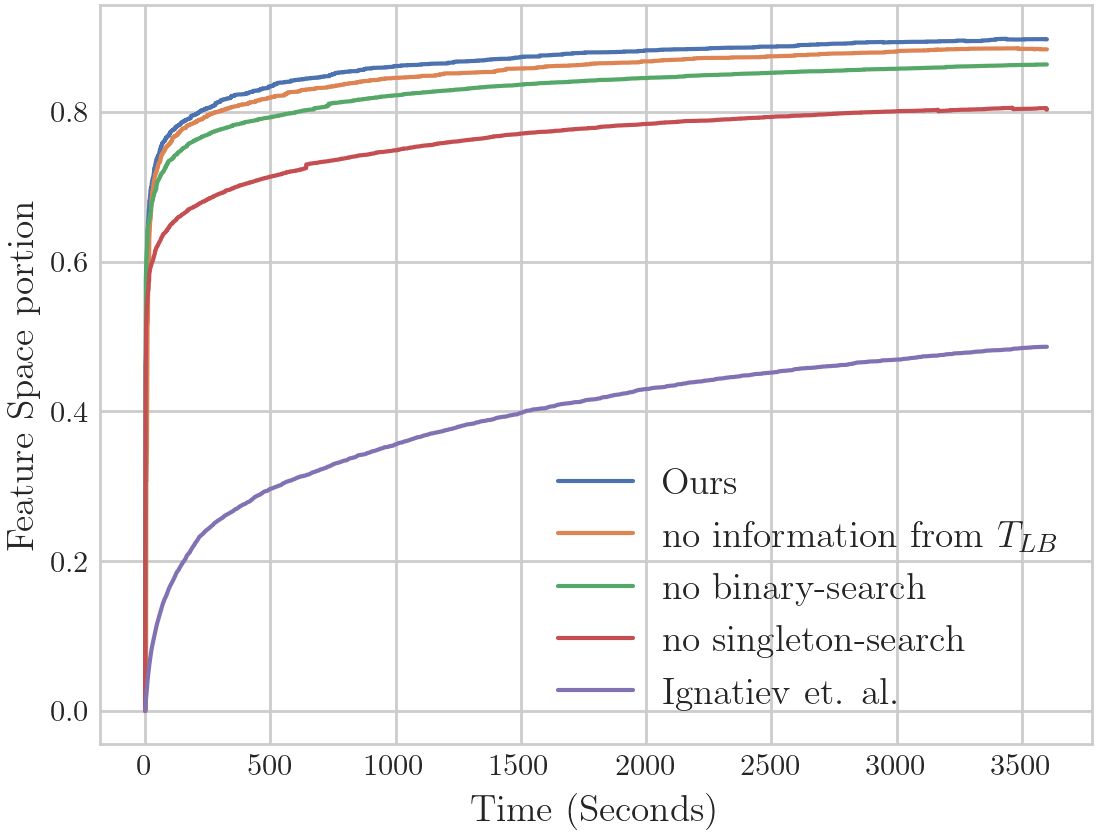}\label{fig:numofqueries}}
	\hfill
	\subfloat[Average explanation size.]{\includegraphics[width=0.43\textwidth]{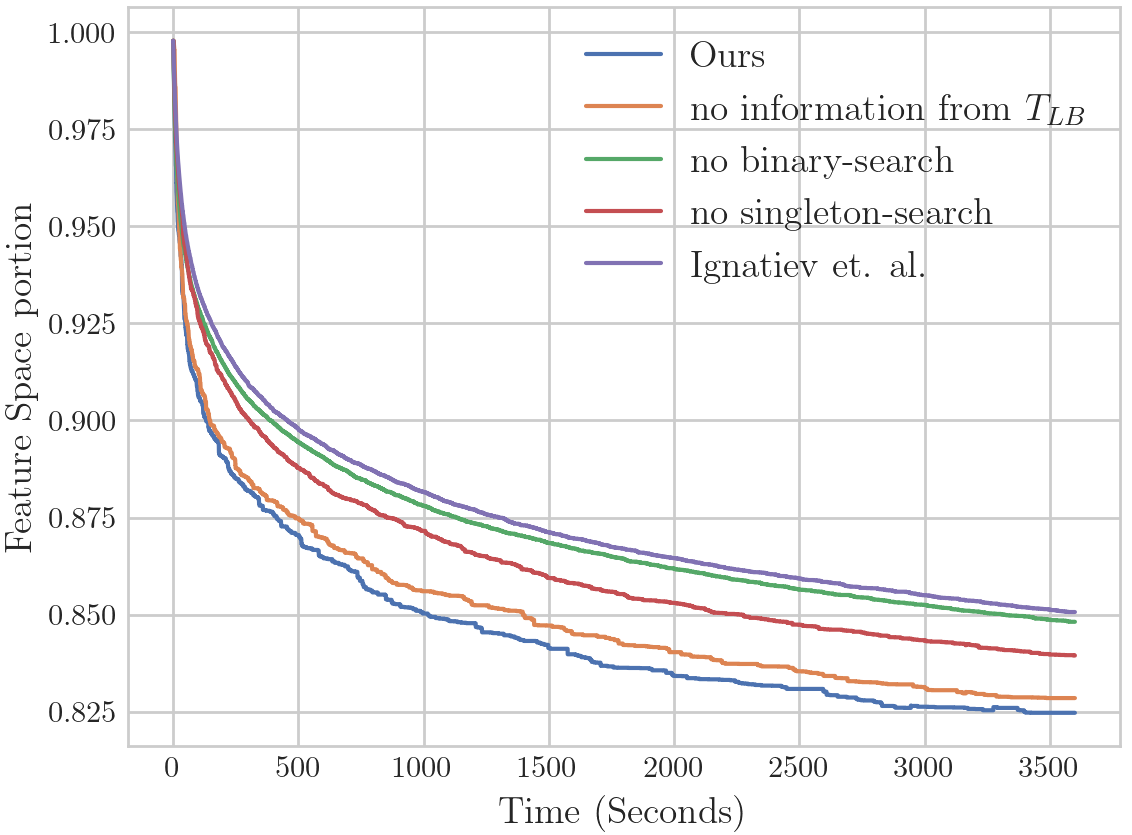}\label{fig:explanation_size}}
	\caption{Our full and ablation-based results, compared to the
          state of the art for finding minimal explanations on the MNIST dataset.}
	\label{fig:final_basic_results}
      \end{figure}

We also looked into comparing our approach to heuristic,
non-verification-based approaches, such as LIME itself; but
these comparisons did not prove to be meaningful, as the
heuristic approaches typically solved benchmarks very quickly,
but very often produced incorrect explanations. This matches the
findings reported in previous
work~\cite{ignatiev2019validating, camburu2019can}.

Next, we set out to evaluate the contribution of each of the
components implemented within our framework to  overall
performance, using an ablation study. Specifically, we ran our
framework with each of the components mentioned in
Sec.~\ref{heuristics_local_search}, i.e.
\begin{inparaenum}[(i)]
	\item information exchange between \tub{} and \tlb{};
	\item the binary search in \tub{}; and
	\item local-singleton search,
\end{inparaenum}
turned off. The results on the MNIST benchmarks appear in
Fig.~\ref{fig:final_basic_results}; see
Fig.~\ref{fig:final_basic_results_appendix_fashion} in the appendix for the Fashion-MNIST results.  Our experiments revealed that each 
of the methods  mentioned in
Sec.~\ref{heuristics_local_search} had a favorable impact on both the
average portion of features verified, and the
average size of the discovered explanation, over
time. Fig \ref{fig:numofqueries} indicates that the local-singleton
search method, used for efficiently proving that
features are bound to be \emph{included} in the explanation, was the
most significant in reducing the number of features remained for
verifying, thus substantially increasing the portion of verified
features. Moreover, Fig.~\ref{fig:explanation_size} indicates that the
binary search method, which is used for grouping \unsat{} queries and
proving the \emph{exclusion} of features from the explanation,
was the most significant for more efficiently obtaining
smaller-sized explanations, over time.

\begin{wrapfigure}{r}{0.4\textwidth}
	\vspace{-1cm}
	\centering
	\includegraphics[width=0.4\textwidth]{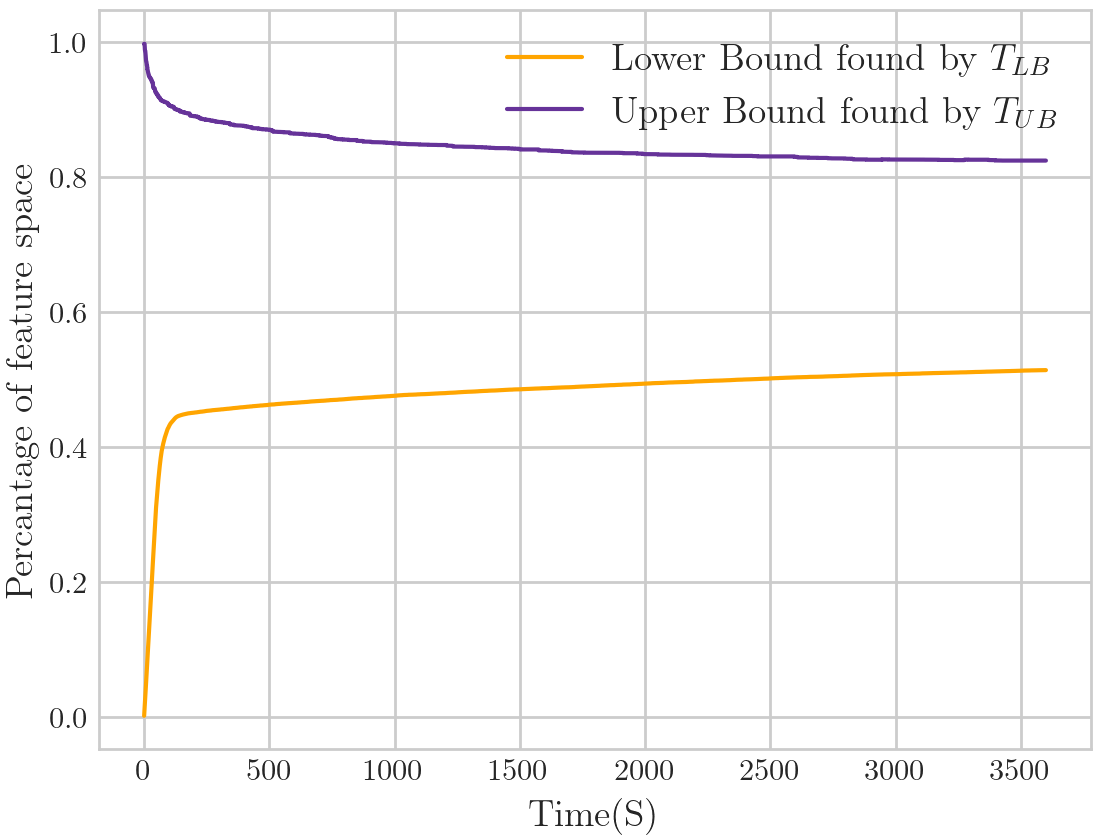}
	\caption{Average approximation of \emph{minimum} explanation over time.}
	\label{fig:minimum_approximation}
 	\vspace{-0.6cm}

\end{wrapfigure}
Our second goal was to evaluate the quality of the \emph{minimum} explanation approximation of our method (using the lower/upper bounds) over time. Results are averaged over all benchmarks of the MNIST dataset
and are presented in Fig.~\ref{fig:minimum_approximation}
(similar results on Fashion-MNIST appear in
Fig.~\ref{fig:fashion_upper_lower} in the appendix). The upper bound
represents the average size of the explanation discovered by
\tub over time, whereas the lower bound represents the average lower bound discovered by
\tlb{} over time. It can be seen that
initially, there
is a steep increase in the size of the lower bound, as
\tlb{} discovered many contrastive singletons. Later, as
we begin iterating over contrastive pairs, the
verification queries take longer to solve,
and progress becomes slower. The average approximation ratio
achieved after an hour was
1.61 for MNIST and 1.19 for Fashion-MNIST.

For our third experiment, we set out to assess the
improvements afforded by bundles. We repeated the
aforementioned experiments, this time using sets of features
representing bundles instead of the features themselves. The
segmentation into bundles was performed using the
\emph{quickshift} method~\cite{vedaldi2008quick}, with LIME
again used for assigning relevance to each
bundle~\cite{ribeiro2016should}.  We approximate the sizes of
the bundle explanations in terms of both the minimum bundle
explanation as well as the minimum (non-bundle) explanation
(as mentioned in Sec.~\ref{sec:bundles} and in Sec.~\ref{sec:bundle_approximation_appendix} of the appendix).
The bundle configuration showed drastic efficiency improvements, with
none of the experiments timing out within the 1-hour time limit, thus improving the portion of timeouts
on the MNIST dataset by 84\%. The efficiency improvement was obtained
at the expense of explanation size, resulting in a decrease of 352\% in the approximation ratios 
obtained for MNIST and 39\% for
Fashion-MNIST. Nevertheless, when calculating the approximation in
terms of the \emph{minimum bundle explanation}, an increase of 12\%
and 8\% was obtained for MNIST and Fashion-MNIST (results are
summarized in Table.~\ref{tab:analysis} in the appendix). For a visual
evaluation, we performed the same set of experiments for both bundle and
non-bundle implementations, using instances with high confidence rates to
obtain smaller-sized explanations that could be more easily
interpreted. A sample of these results is presented in
Fig.~\ref{fig:bundle_evaluation}. Empirically, we observe that the
bundle-produced explanations are less complex and more comprehensible. 
\begin{figure}[!tbp]
	\centering
	\subfloat[Original Image]{\includegraphics[width=0.33\textwidth]{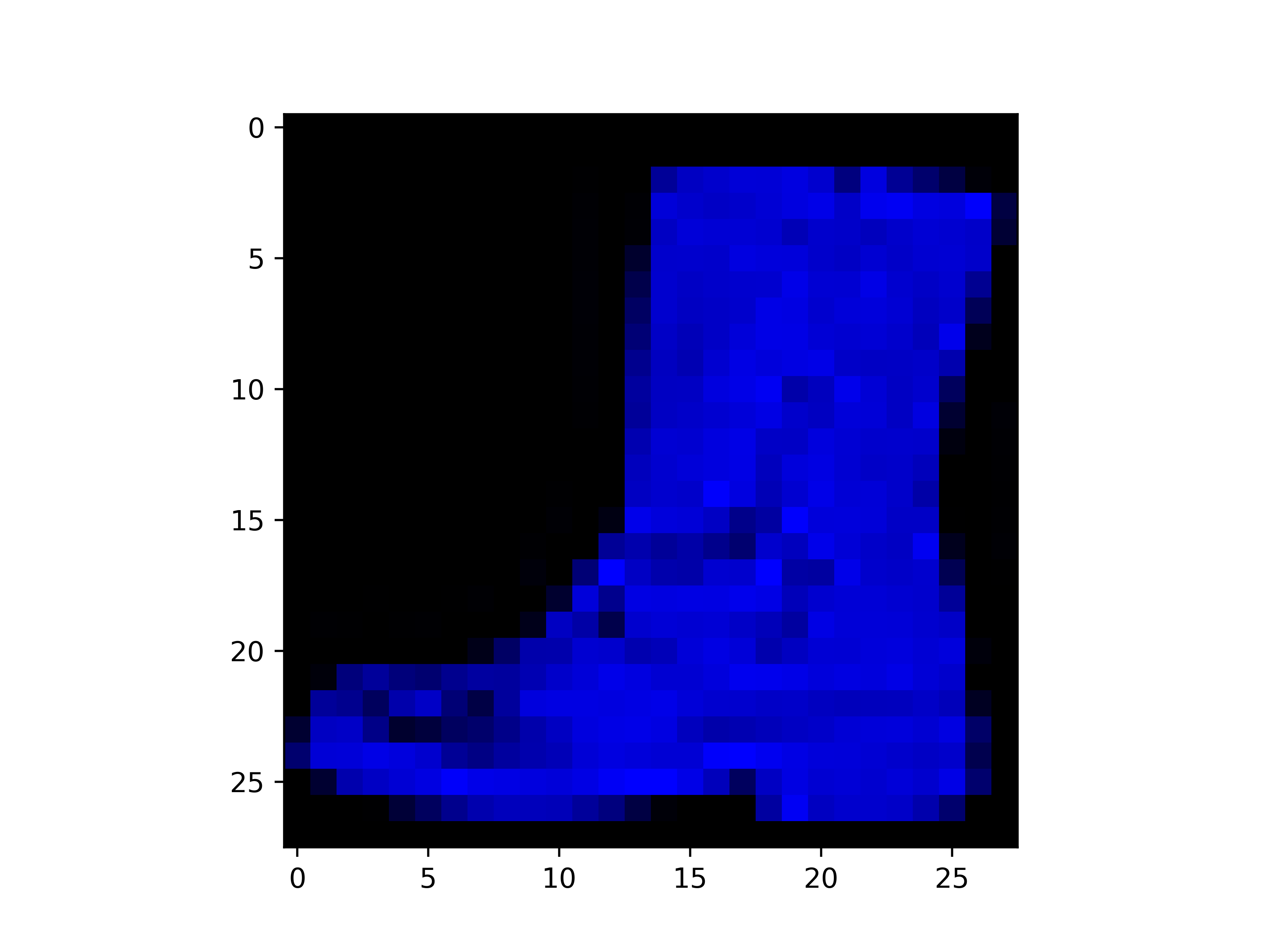}}
	\hfill
	\subfloat[Explanation]{\includegraphics[width=0.33\textwidth]{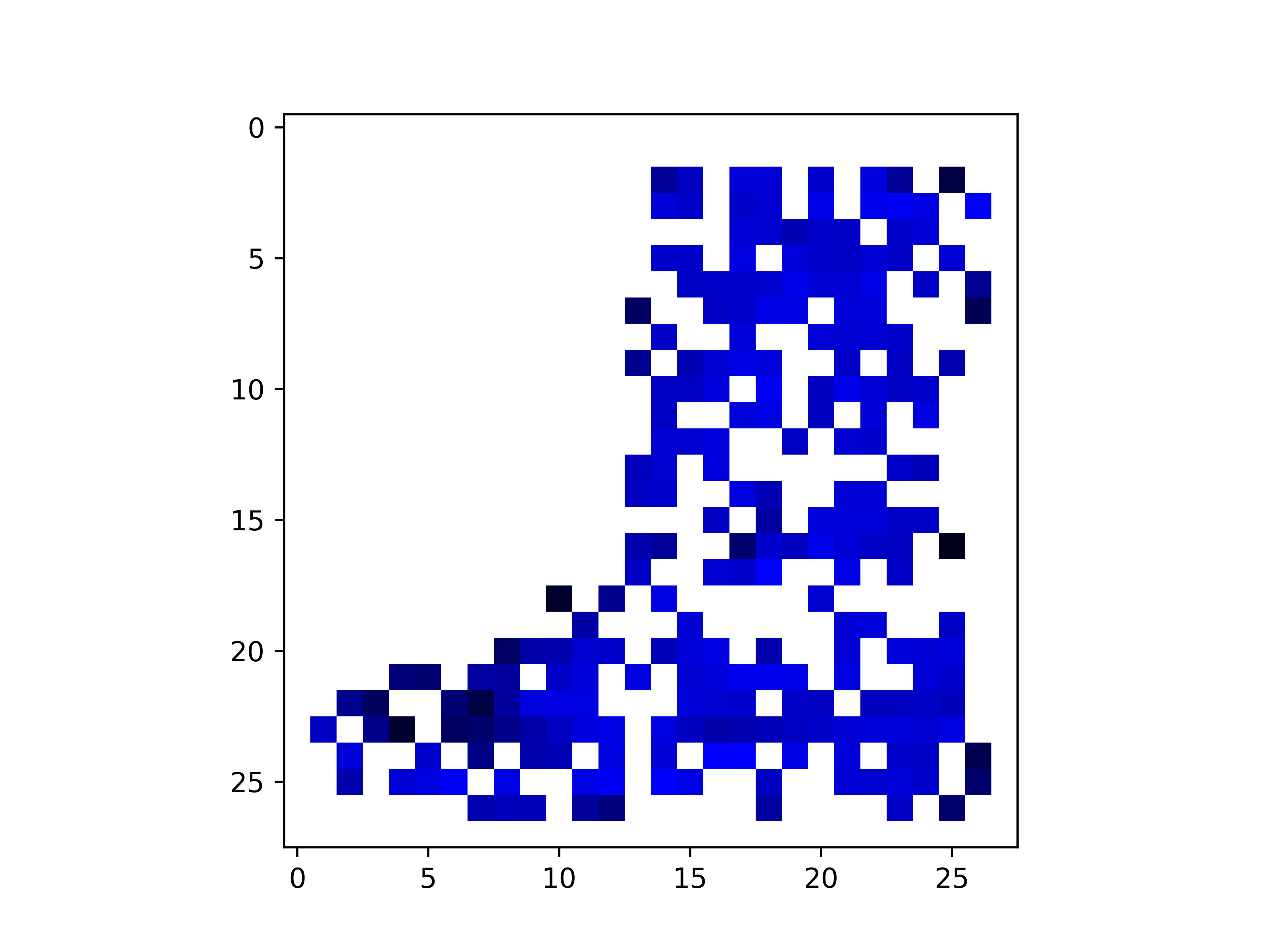}}
	\hfill
	\subfloat[Bundle explanation]{\includegraphics[width=0.33\textwidth]{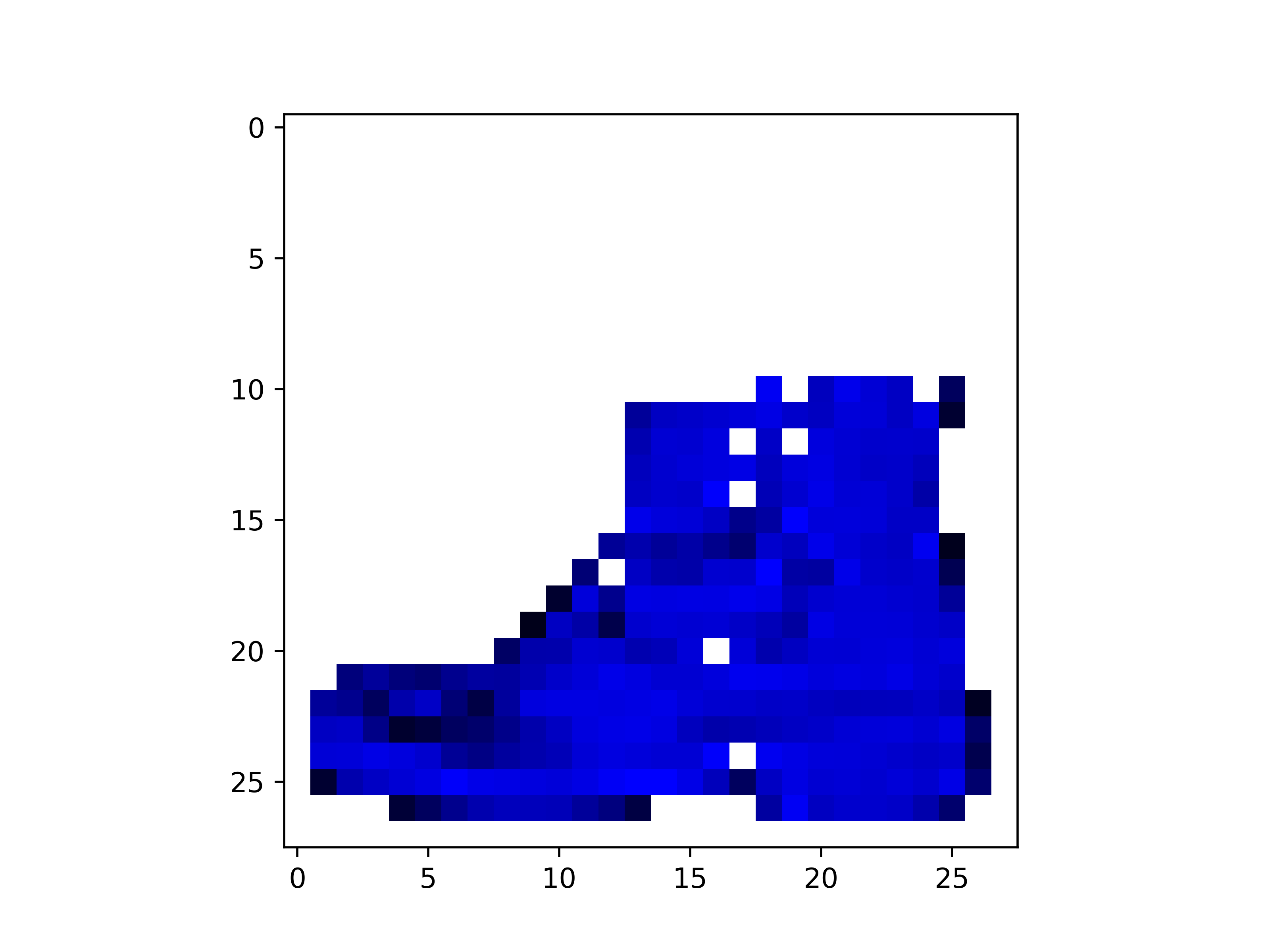}}
	\caption{Minimal explanations and bundle explanations found by our method on the Fashion-MNIST dataset. White pixels are not part of the explanation.}
	\label{fig:bundle_evaluation}
\end{figure} 


Overall, we regard our results as compelling evidence that
verification-based XAI can soundly produce meaningful
explanations, and that our improvements can indeed
significantly improve its runtime.
\section{Related Work}
\label{sec:RelatedWork}
Our work is another step in the ongoing quest for formal
explainability of DNNs, using
verification~\cite{ignatiev2019abduction, shi2020tractable, ignatiev2020towards, fel2022don}. Related approaches have applied
enumeration of contrastive
examples~\cite{ignatiev2020contrastive,
	ignatiev2019abduction}, which is also an ingredient of our
approach. Other approaches focus on producing abductive explanations around an epsilon environment~\cite{wu2022verix, la2021guaranteed}.  Similar work has been carried out for decision
sets~\cite{ignatiev2018sat}, lists~\cite{ignatiev2021sat} and
trees~\cite{izza2020explaining}, where the problem appears to
be simpler to solve~\cite{izza2020explaining}. Our work here
tackles DNNs, which are known to be more difficult to
verify~\cite{katz2017reluplex}.

Prior work has also sought to produce approximate
explanations, e.g., by using $\delta$-relevant
sets~\cite{waeldchen2021computational,
	izza2021efficient}. This line of work has focused on
probabilistic methods for generating explanations, which
jeopardizes soundness.  There has also been extensive work in
heuristic XAI~\cite{ribeiro2016should, ribeiro2018anchors,
	smilkov2017smoothgrad, lundberg2017unified}, but here, too, the produced
explanations are not guaranteed to be correct.



\section{Conclusion}

Although DNNs are becoming crucial components of safety-critical
systems, they remain ``black-boxes'', and cannot be interpreted by
humans.  Our work seeks to mitigate this concern, by providing
formally correct explanations for the choices that a DNN makes. Since
discovering the minimum explanations is difficult, we focus on
approximate explanations, and suggest multiple techniques for
expediting our approach --- thus significantly improving over the
current state of the art. In addition, we propose to use bundles to
efficiently produce more meaningful explanations. Moving forward, we
plan to leverage lightweight DNN verification techniques for improving
the scalability of our approach~\cite{LiArLaBaKo20}, as well as extend
it to support additional DNN architectures.
\label{sec:Conclusion}



{
	\bibliographystyle{abbrv}
	\bibliography{references}
}

\newpage
\appendix
\setcounter{section}{0}
\def\thesection{\Alph{section}}

\noindent
{\huge{Appendix}}

\section{Explanations and Contrastive Examples}
\label{sec:appendix_explanations_contrastive_proofs}

\setcounter{lemma}{0}

\begin{lemma}
	Every contrastive singleton is contained in all explanations.
\end{lemma}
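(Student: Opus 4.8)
The plan is to argue by contradiction directly from the two defining equations, with no auxiliary machinery. Let $C=\{j\}$ be a contrastive singleton for the instance $(v,c)$, and let $E$ be an arbitrary explanation of $(v,c)$; the goal is to show $j\in E$.

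First I would unfold the definition of a contrastive singleton: instantiating Eq.~\ref{eq:contrastive_examples} with $C=\{j\}$ yields a witness point $x^\star\in\mathbb{F}$ that agrees with $v$ on every feature except possibly $j$, i.e.\ $x^\star_i=v_i$ for all $i\in F\setminus\{j\}$, and nevertheless $N(x^\star)\neq c$. Next, suppose toward a contradiction that $j\notin E$. Then $E\subseteq F\setminus\{j\}$, so the witness $x^\star$ in particular satisfies $x^\star_i=v_i$ for all $i\in E$, i.e.\ $\bigwedge_{i\in E}(x^\star_i=v_i)$ holds. Feeding $x^\star$ into the explanation property Eq.~\ref{eq:explanation} then forces $N(x^\star)=c$, contradicting $N(x^\star)\neq c$. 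Hence $j\in E$; and since $E$ was an arbitrary explanation, every contrastive singleton is contained in every explanation.

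There is no genuine obstacle here — the argument is a two-line syllogism. The only point requiring care is the containment $E\subseteq F\setminus\{j\}$, which is precisely what licenses transferring the witness from the contrastive setting (where it fixes \emph{all} features but $j$) into the explanation setting (where it is only required to fix at least the features in $E$). I would also remark that this template generalizes verbatim to the $k$-element case used later: if a contrastive set $C$ were disjoint from an explanation $E$, its witness would fix $F\setminus C\supseteq E$ yet be misclassified, the same contradiction.
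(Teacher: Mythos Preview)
Your proof is correct and follows essentially the same contradiction argument as the paper's: assume the singleton is not in some explanation $E$, deduce $E\subseteq F\setminus\{j\}$, and use the contrastive witness to violate Eq.~\ref{eq:explanation}. Your version is in fact slightly more explicit in naming and tracking the witness $x^\star$, and your closing remark about the $k$-element generalization anticipates exactly how the paper handles Lemma~\ref{one_element_of_pair}.
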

\begin{proof}
	Let $N$ be a classification function, $(v,c)$ a classification
	instance, and $S \in F$ a contrastive singleton. Assume towards
	contradiction that $S$ is not contained in some explanation $E
	\subseteq F$ of instance $(v,c)$, 
	namely $S \not\subseteq E$. This means that $E \subseteq F\setminus S$, and from Theorem~\ref{eq:contrastive_examples} we can conclude that:
	\begin{equation*}
		\exists(x\in \mathbb{F}).[\wedge_{i\in E}(x_{i}=v_{i})\wedge(N(x)\neq c)]
	\end{equation*}
	which is a contradiction to Eq.~\ref{eq:explanation}.\qed
\end{proof}

\begin{lemma}
All explanations contain at least one element of every contrastive pair.
\end{lemma}
\begin{proof}
	The proof is almost identical to that of
	Lemma~\ref{singelton_in_all_explanations}. Let $N$ be a
	classification function, $(v,c)$ a classification instance,
	$P \in F$ a contrastive pair and $E \subseteq F$ an explanation of
	instance $(v,c)$. Assume towards contradiction that there does not
	exist any element from the pair $P$ that is contained in $E$,
	meaning that $P\not\subseteq E$. Hence, $E \subseteq F\setminus P$,
	and from Theorem~\ref{eq:contrastive_examples} we can conclude that:
	\begin{equation*}
		\exists(x\in \mathbb{F}).[\wedge_{i\in E}(x_{i}=v_{i})\wedge(N(x)\neq c)]
	\end{equation*}
	which is a contradiction to Eq.~\ref{eq:explanation}.\qed
\end{proof}
See Fig.~\ref{contrastive_singletons_pairs_appendix} for an illustration of contrastive singletons and pairs.

\section{Minimum Hitting Set (MHS) and Minimum Vertex Cover (MVC)}
\label{mhs-defintion-appendix}

\mysubsection{Minimum Hitting Set (MHS).} Given a collection
$\mathbb{S}$ of sets from a universe U, a hitting set $h$ for
$\mathbb{S}$ is a set such that
$\forall S \in \mathbb{S}, h\cap S \neq \emptyset$. A hitting set $h$
is said to be \emph{minimal} if none of its subsets is a hitting set,
and \emph{minimum} when it has the smallest possible cardinality among
all hitting sets.

\mysubsection{K Minimum Hitting Set (K-MHS).} K-MHS denotes the same
problem as MHS, but when sets are at the size of at most $k$. This is
a re-formalization of the minimum vertex cover (MVC) problem on a
k-hyper-graph, where sets are treated as edges and elements in sets are
treated as vertices. This implies that a 2-MHS problem is simply the
classic \emph{minimum vertex cover (MVC)} problem.

\section{Extending \tlb beyond Contrastive Pairs}
\label{extending-tlb-appendix}
The lower bounding thread, \tlb, (see Sec.~\ref{sec:approximations})
is used for computing a lower bound on the size of the minimum explanation. This
is done by computing contrastive singletons and contrastive pairs, and
using them in
calculating an under approximation for the MHS of all contrastive
examples. This approach can be extended to use contrastive examples of
larger sizes ($k=3,4,\ldots$). Finding these contrastive examples may
improve the approximation of the global minimum, but may also
render the approach less scalable. In the worst case, finding all sets of size $k$ requires
performing $O(\binom{m}{k})$ queries to the underlying
verifier. Since the search space becomes larger as $k$ increases, each
query may become more expensive as well. In case of  a large feature
spaces, if we are interested in performing an approximation via a
greedy algorithm, the quality of the approximation also deteriorates as $k$
increases. The general K-MHS problem has a polynomial $k$-approximating
algorithm, and this bound was shown to be tight for all
$k\geq3$~\cite{dinur2003new}. Theoretically, if $\tlb$ continues
finding contrastive examples of larger sizes, $k$ in the final step is
the minimum $k$ on which the MHS of all contrastive examples of size
$k$ and less are equal to the minimum explanation. The full, exact
approximation can be summarized by the following formula:
\begin{equation}
	\begin{aligned}
		\lowerb& = |\singletons|+|\mvc|\\
		&\leq 
		\sum_{\kisoneappendix}^{\kismaxkappendix}|\kmhskcxp| \\
		&=\mhscxp=\minimumexplanation 			
	\end{aligned}
\end{equation}
where k-Cxps denotes all contrastive examples of size $k$, and maxk denotes the size of $k$ in the final iteration. 


\section{Minimal Bundle Explanations}
\label{minimal-bundle-explanation-appendix}
Let $v=(v_1,...v_m)\in \mathbb{F}$ be an input with classification
$N(v)=c$, and let $B$ be the set of all bundles.  The definition
of a \emph{minimal bundle explanation} $E_B\subseteq B$ for instance
$(v,c)$ is:

\begin{equation}
	\begin{aligned}
		&(\forall(x\in \mathbb{F}).[\wedge_{i\in
			\cup E_B}(x_{i}=v_{i})\to(N(x)=c)]) \wedge\\
		&(\forall(j_B\in E_B).[  \exists(y\in \mathbb{F}).[\wedge_{i\in \cup E_B\setminus j_B}(y_{i}=v_{i})\wedge(N(y)\neq c)])
	\end{aligned}
\end{equation}

A \emph{minimum bundle explanation} is a minimal bundle explanation of minimum size.
We define a \emph{contrastive bundle example} $C_B$ for input $(v,c)$ and the set of all bundles $B$, as a subset of bundles $C_B\subseteq B$ such that:

\begin{equation}
	\label{eq:contrastive_bundle_example}
	\exists(x\in \mathbb{F}).[\wedge_{i\in \cup B\setminus C_B}(x_{i}=v_{i})\wedge(N(x)\neq c)]
\end{equation}

A \emph{contrastive bundle singleton} is defined as a contrastive bundle example of size 1, and a \emph{contrastive bundle pair} as a contrastive bundle example of size 2 (which does not contain contrastive bundle singletons).

\mysubsection{Minimum Hitting Set of Bundles (MHSB).}  We use the
common definition for a \emph{minimum hitting set of bundles
	(MHSB)}~\cite{angel2009minimum}, which is as follows. Given
a set of $n$ elements $\mathcal{E}=\{e_1,e_2,...,e_n\}$, each element
$e_i$ ($i=1,...n$) has a non-negative cost $c_i$. A bundle $b$ is a
subset of $\mathcal{E}$. We are also given a collection
$\mathcal{S}=\{S_1,S_2,...,S_m\}$ of $m$ sets of bundles. More
precisely, each set $S_j$ ($j=1,...,m$) is composed of $g(j)$ distinct
bundles $b_j^1 b_j^2, ..., b_j^{g(j)}$. A solution to MHSB is a subset
$\mathcal{E}' \subseteq \mathcal{E}$ such that for every
$S_j\in \mathcal{S}$, at least one bundle is covered, i.e,
$b_{j}^{l}\subseteq \mathcal{E}'$ for some $l\in
\{1,2,...,g(j)\}$. The total cost of the solution, denoted by
$C(\mathcal{E}')$, is $\sum_{\{i\ |\ e_i \in e'\}}c_i$ (a cost of an
element appearing in several bundles is counted once). The objective
is to find a solution with minimum total cost.

Notice that this is a more general definition than our case, where each element (feature) inside a bundle has a cost of 1, meaning that the cost of each bundle is the number of features it contains. The proven dual MHS relationship between explanations and contrastive examples~\cite{ignatiev2020contrastive} can thus be trivially expanded to include bundles ---i.e., that the MHSB of all contrastive bundle examples constitutes as the minimum bundle explanation and that the MHSB of all bundle explanations constitutes as a minimum bundle contrastive example.


\section{Approximating Bundle Explanations}\label{sec:bundle_approximation_appendix}

\mysubsection{Method 1: Approximating Minimum Bundle Explanations.}
We first discuss how to obtain a provable approximation for the minimum
bundle explanation. This is a trivial extension of our method
suggested in Sec.~\ref{sec:approximations}. The minimal bundle
explanation found by \tub is clearly an upper bound for the minimum
bundle explanation. Regarding the lower bound computed by \tlb, the
union of all bundle singletons is a lower bound, since every
contrastive bundle singleton is contained in the minimum bundle
explanation. Moreover, the minimum \emph{weighted} vertex cover of all
contrastive bundle pairs (where weights indicate the number of
features in each bundle) constitutes a lower bound for the MHSB of
all contrastive bundle examples. In cases of large feature spaces, a
2-approximating linear greedy algorithm can be used for the minimum
weighted vertex cover~\cite{bar1981linear}. Overall, the following
lower bound can be calculated:
\begin{equation}
	\label{appendix_bundle_approx_1_option}
	\begin{aligned}
		\lowerbbundle &= |\allbundlesingletons|+|\mwvcbpairs| \\
		&\leq\ |\mhsbbcxps| = \minimumbundle
	\end{aligned}
\end{equation}
where \lowerbbundle denotes the lower bound that is calculated for our evaluation, BSingletons denotes the set of all contrastive bundle singletons, Bpairs denotes the set of all contrastive bundle pairs, MWVC is the minimum weighted vertex cover, \bcxps denotes the set of all contrastive bundle examples, and \minimumbundle is the minimum bundle explanation.

\mysubsection{Method 2: Approximating Minimum (Non-Bundle) Explanations.}
The second approach is to calculate an approximation ratio with respect to a regular, non-bundle
minimum explanation. The minimal bundle explanation found by \tub is
an upper bound for the minimum non-bundle explanation, since the minimal bundle explanation is also an explanation by itself (Theorem~\ref{eq:bundle_explanation}). For obtaining the lower bound,
we can analyze contrastive bundle
examples found by \tlb for extracting contrastive non-bundle
examples, and thus enabling the computation of an under-approximation to the MHS of all
contrastive examples (an illustration is also depicted on Fig. \ref{fig:bundle_approximation_example}).

For example, it is straightforward that every contrastive bundle singleton is a contasrive example by itself, and thus at least one of the bundle's
elements is contained in a minimum explanation. Likewise, for every
contrastive bundle pair $(b_1,b_2)$ there exist at least two
subsets, $s_1\subseteq\ b_1$ and $s_2\subseteq b_2$, such that
$s_1\cup s_2$ is a contrastive example. This means that a regular
minimum (unweighted) vertex cover can be calculated by the following
approximation (used for our evaluation):
\begin{equation}
	\label{appendix_bundle_approx_2_option}
	\begin{aligned}
		|\bsingletons|+|\mvcbundle|\leq \minimumexplanation
	\end{aligned}
\end{equation}

An additional, optional approach for tightening the bound even more is to search for
contrastive examples of features within each bundle.  This can be done
for every contrastive bundle singleton, by calculating the MHS of all
contrastive examples within that certain bundle. Since bundles
typically consist of small feature sets, it may be computationally
feasible to compute the MHS of all features within each bundle. If
not, the procedure that we suggested in Sec.~\ref{sec:approximations}
can be repeated for each bundle. More precisely, we propose to iterate
on all features and pairs in each bundle, to find all contrastive
singletons and pairs within that bundle, and then to calculate a lower
bound by solving an unweighted vertex cover problem. Further, we can
perform a similar process (of either calculating the MHS or performing
the lower bound procedure from Sec.~\ref{sec:approximations}) on the
union of all contrastive bundle pairs, as well. Notice that by doing
so, we do not need to iterate on the entire set of all pairs, since
features that are within the same contrastive bundle are necessarily
not contrastive pairs (because otherwise, that bundle would be a
contrastive bundle singleton).  Thus, we can arrive at the following
bound:
\begin{equation}
	\label{appendix_bundle_approx_3_option}
	\begin{aligned}
		|\bsingletons|&+|\mvcbundle| \\
		&\leq
		|\sum_{\bsingletonsinstance \subseteq \bsingletons}\lowerb(\bsingletonsinstance)|+|\lowerb(\cup \bpairs)| \\
		&\leq
		\sum_{\bsingletonsinstance \subseteq \bsingletons}|\mhsbsingletons|+|\mhsbpairs| \\
		&\leq \minimumexplanation
	\end{aligned}
\end{equation}
\vspace{3cm}
\section{Additional Evaluation}\label{appendix-evaluations}

\mysubsection{Fashion-MNIST Evaluation.}
Figs.~\ref{fig:final_basic_results_appendix_fashion} and
\ref{fig:fashion_upper_lower} depict the results of our evaluation using
the Fashion-MNIST network.

\begin{figure}[h]
	\centering
	\subfloat[Average portion of features verified to be included/excluded from explanation]{\includegraphics[width=0.5\textwidth]{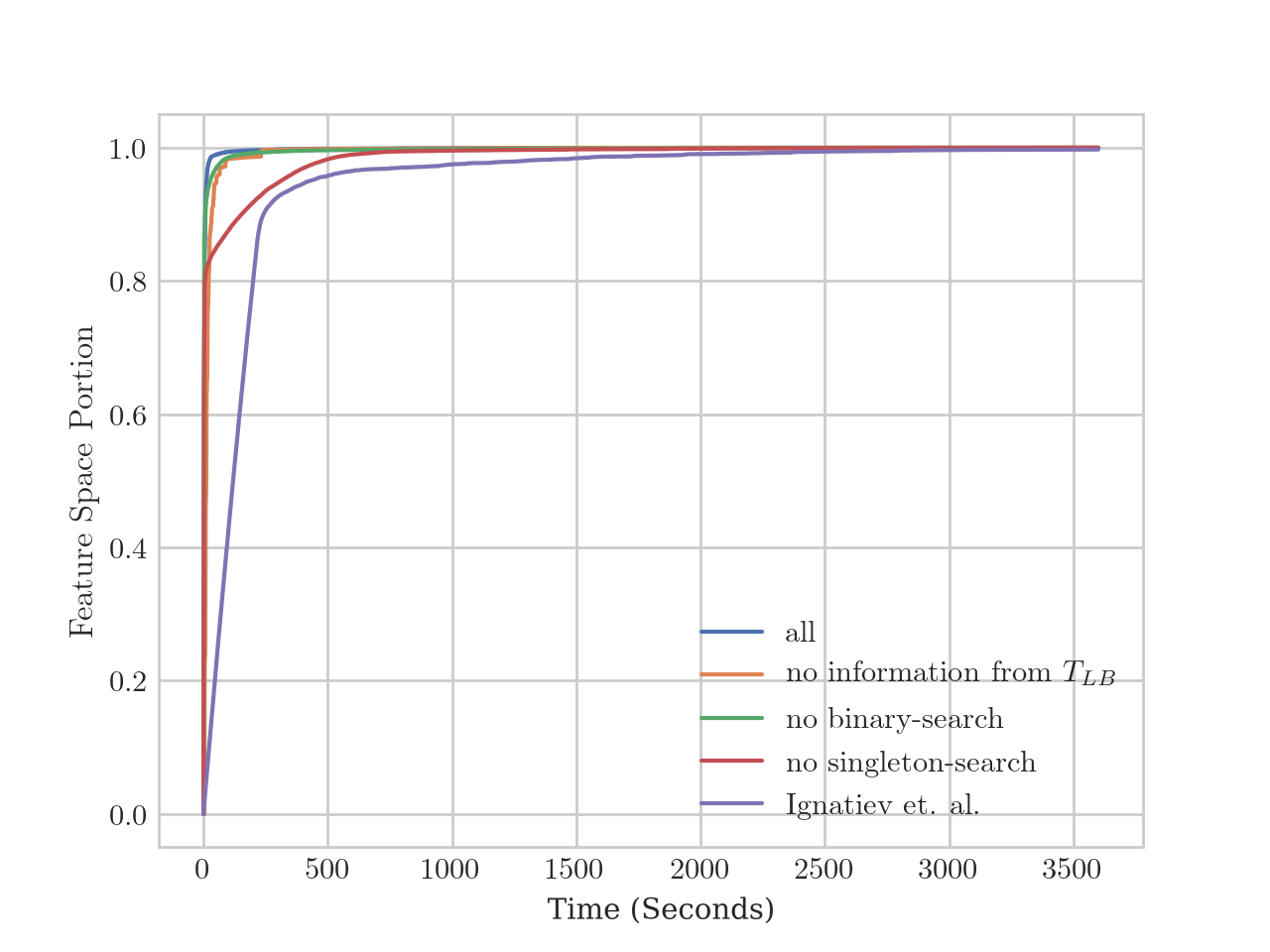}\label{fig:appendix_fashion_1}}
	\hfill
	\subfloat[Average size of explanation]{\includegraphics[width=0.5\textwidth]{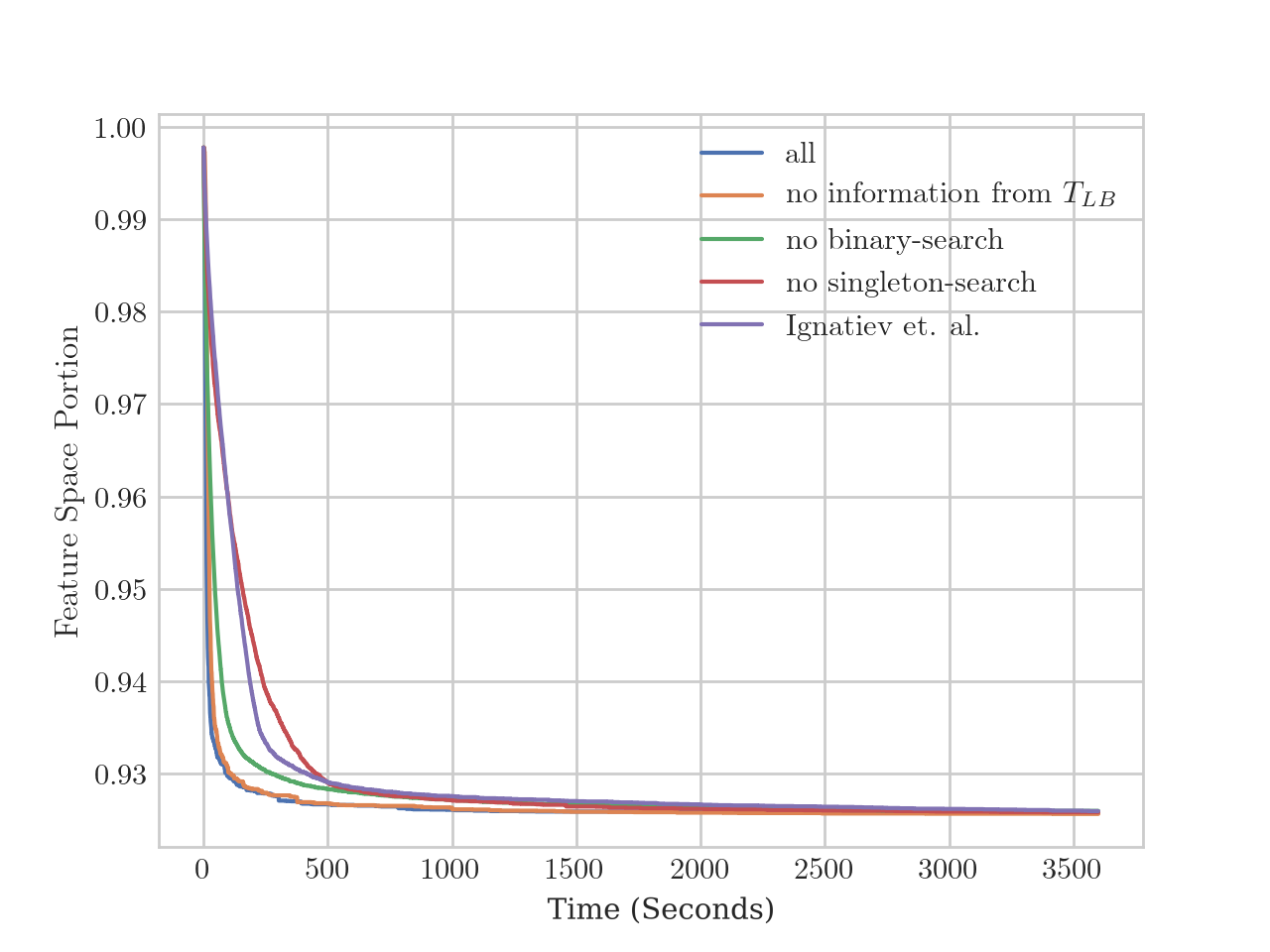}\label{fig:appendix_fashion_2}}
	\caption{Our full and ablated results compared to the state-of-the-art for finding minimal explanations over the Fashion-MNIST dataset}
	\label{fig:final_basic_results_appendix_fashion}
\end{figure}

\begin{figure}[h]
	\vspace{-0.3cm}
	\centering
	\includegraphics[width=0.5\textwidth]{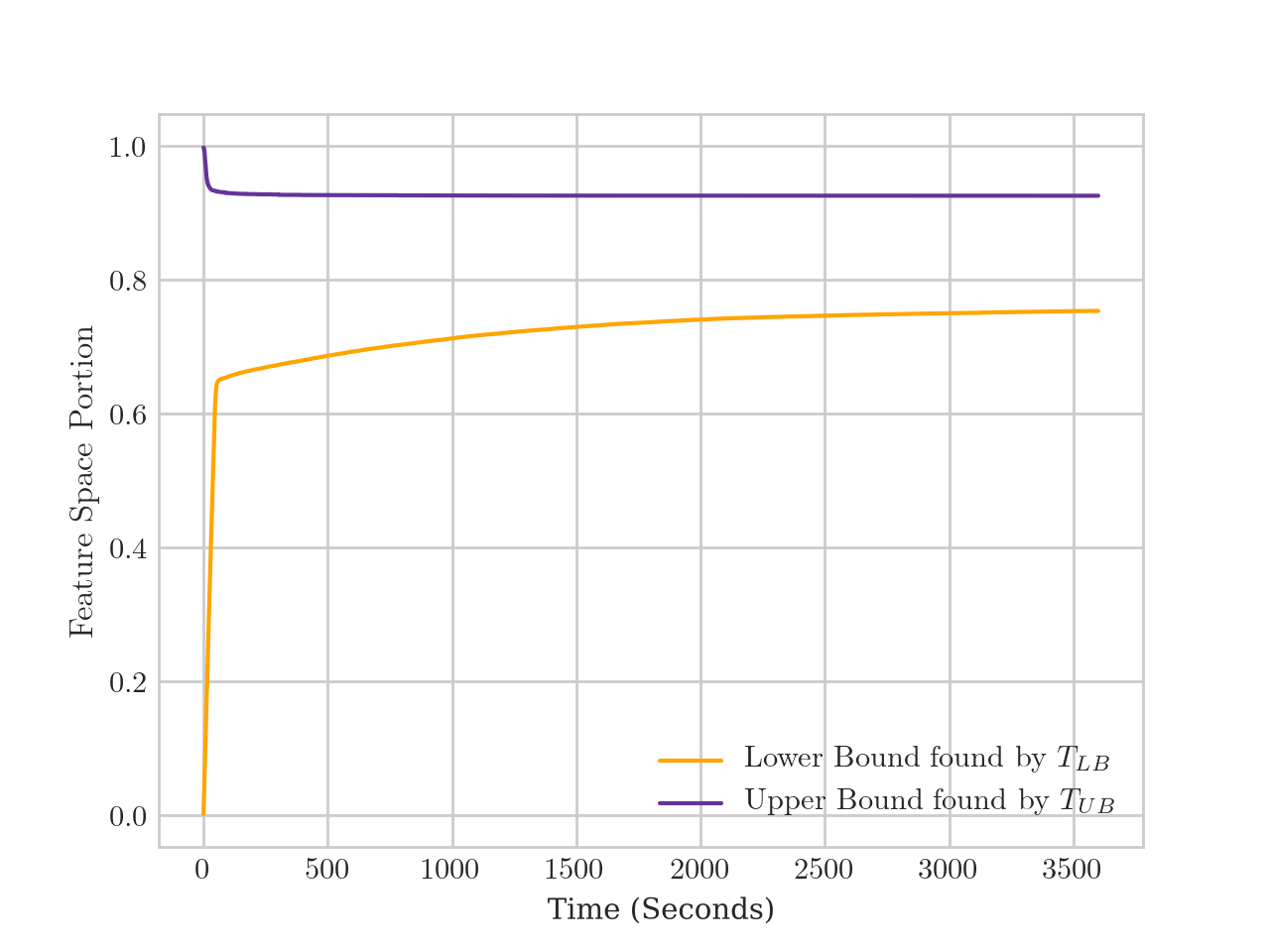}
	\caption{Average approximation of \emph{minimum} explanation over time on the Fashion-MNIST dataset}
	\label{fig:fashion_upper_lower}
\end{figure}
\mysubsection{Minimal Bundle Explanation Experiments.}
Table~\ref{tab:analysis} summarizes the results of our bundle
explanation experiments.


\begin{center}
	\begin{table}
		\caption{Bundle and non-bundle implementation results,
			evaluated using the following criteria: (\romannumeral 1)
			the proportion of completed tasks within a one hour time
			limit; (\romannumeral 2) the average run-time per task, in
			seconds; (\romannumeral 3) the average final approximation
			ratio found of the minimum bundle explanation:
			$E_{\text{MB}\xspace}$ (lower bound obtained per
			Eq.~\ref{appendix_bundle_approx_2_option}); and
			(\romannumeral 4) the average final approximation found of
			the minimum explanation: $E_{M}$ (lower bound obtained per Eq.~\ref{appendix_bundle_approx_1_option}).}
		\vspace{0.5cm}
		\label{tab:analysis}
		\centering
		\scalebox{0.75}{
			\begin{tabular}{ |c|c|c|c|c| }
				\hline
				\textbf{Dataset} & \textbf{Criteria} & \textbf{Bundle Explanation} &  \textbf{Non-Bundle Explanation} \\
				\hline
				\multirow{4}{*}{MNIST} & Completed Portion & 100\% & 16\%  \\
				& Run-time (Seconds) & 115.3 &  3264.5 \\
				& $E_{MB}$ Approx. & 1.49 & -  \\
				& $E_{M}$ Approx. & 5.13 & 1.61 \\
				\hline
				\multirow{4}{*}{Fashion-MNIST} & Completed Portion & 100\% & 100\%  \\
				& Run-time (Seconds) & 119.6 & 140.82  \\
				& $E_{MB}$ Approx. & 1.15 & - \\
				& $E_{M}$ Approx. & 1.62 & 1.23  \\
				\hline
			\end{tabular}
		}
	\end{table}
\end{center} 
\FloatBarrier
\section{Additional Figures}
We present here additional illustrations that demonstrate contrastive
examples (specifically, contrastive singletons and pairs)
(Fig.~\ref{contrastive_singletons_pairs_appendix}), an illustration of the binary search heuristic~\ref{fig:binary-search} and the local singleton search heuristic~\ref{fig:local-singelton-search} for expediting the search for minimal explanations, and minimal bundle
explanation approximations
(Fig.~\ref{fig:bundle_approximation_example}). Additionally, we attach
an illustration of one-pixel attacks, which can be regarded as
contrastive singletons (Fig.~\ref{fig:one-pixel-attack}).

\begin{figure}
	\centering
	\subfloat[Contrastive singletons]{\includegraphics[width=0.47\textwidth]{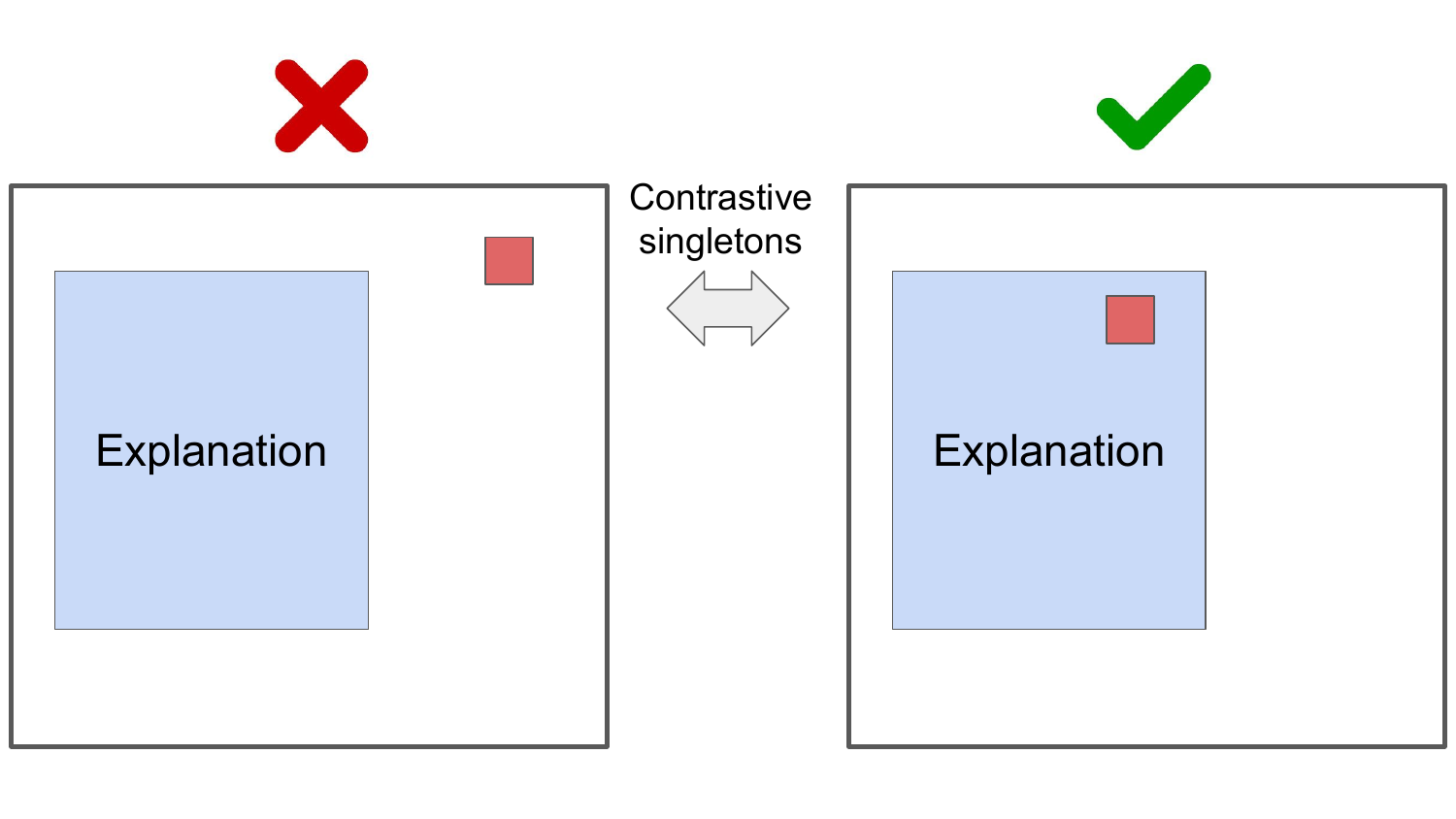}\label{fig:contrastive_singeltons_illustration}}
	\hfill
	\subfloat[Contrastive pairs]{\includegraphics[width=0.47\textwidth]{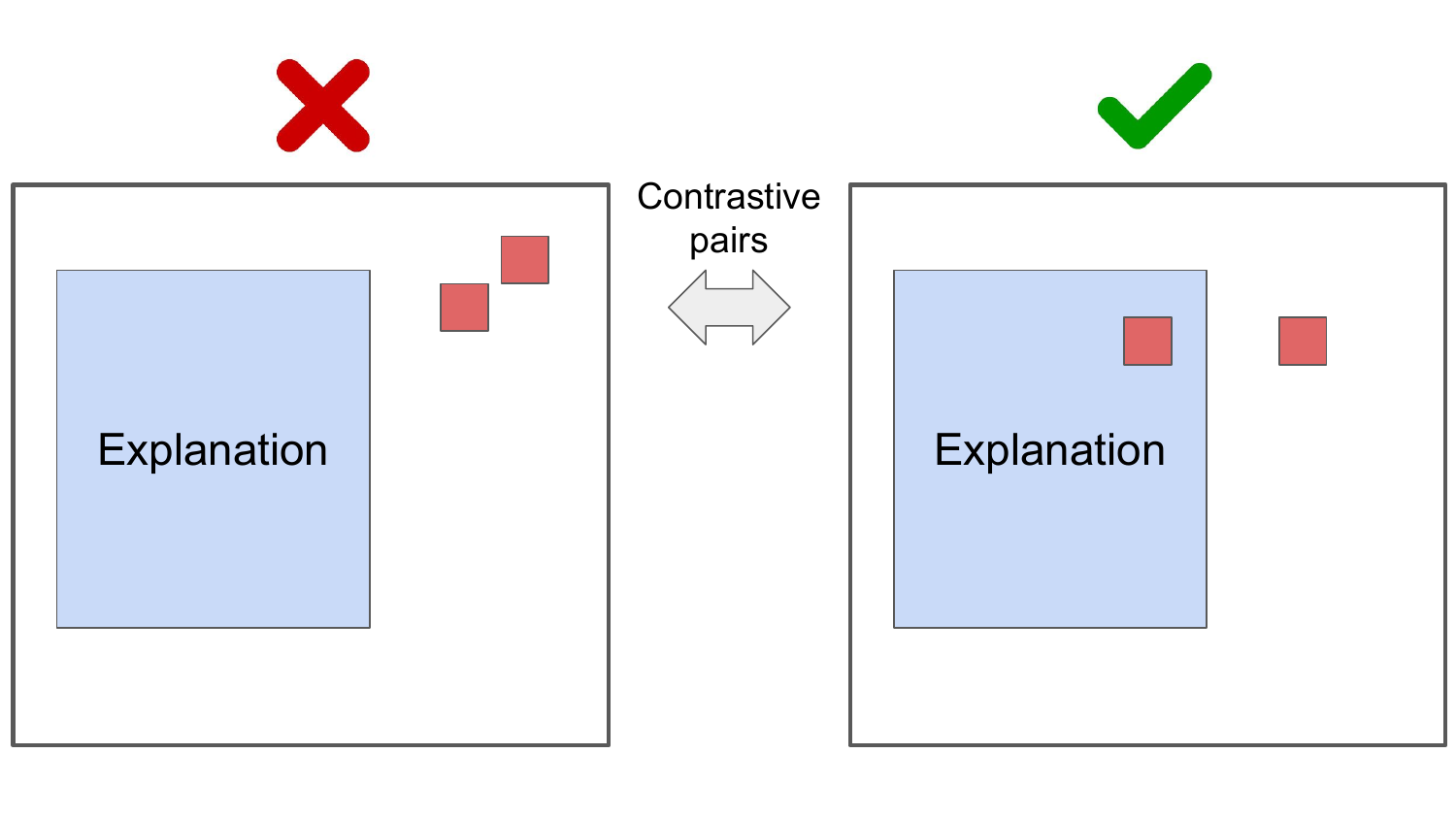}\label{fig:contrastive_pairs_illustration}}
	\caption{An illustration of contrastive singletons and contrastive pairs. Every singleton is contained in every explanation, and every pair holds at least one feature in every explanation.}
	\label{contrastive_singletons_pairs_appendix}
\end{figure}
\begin{figure}
	\centering
	\includegraphics[width=0.6\textwidth]{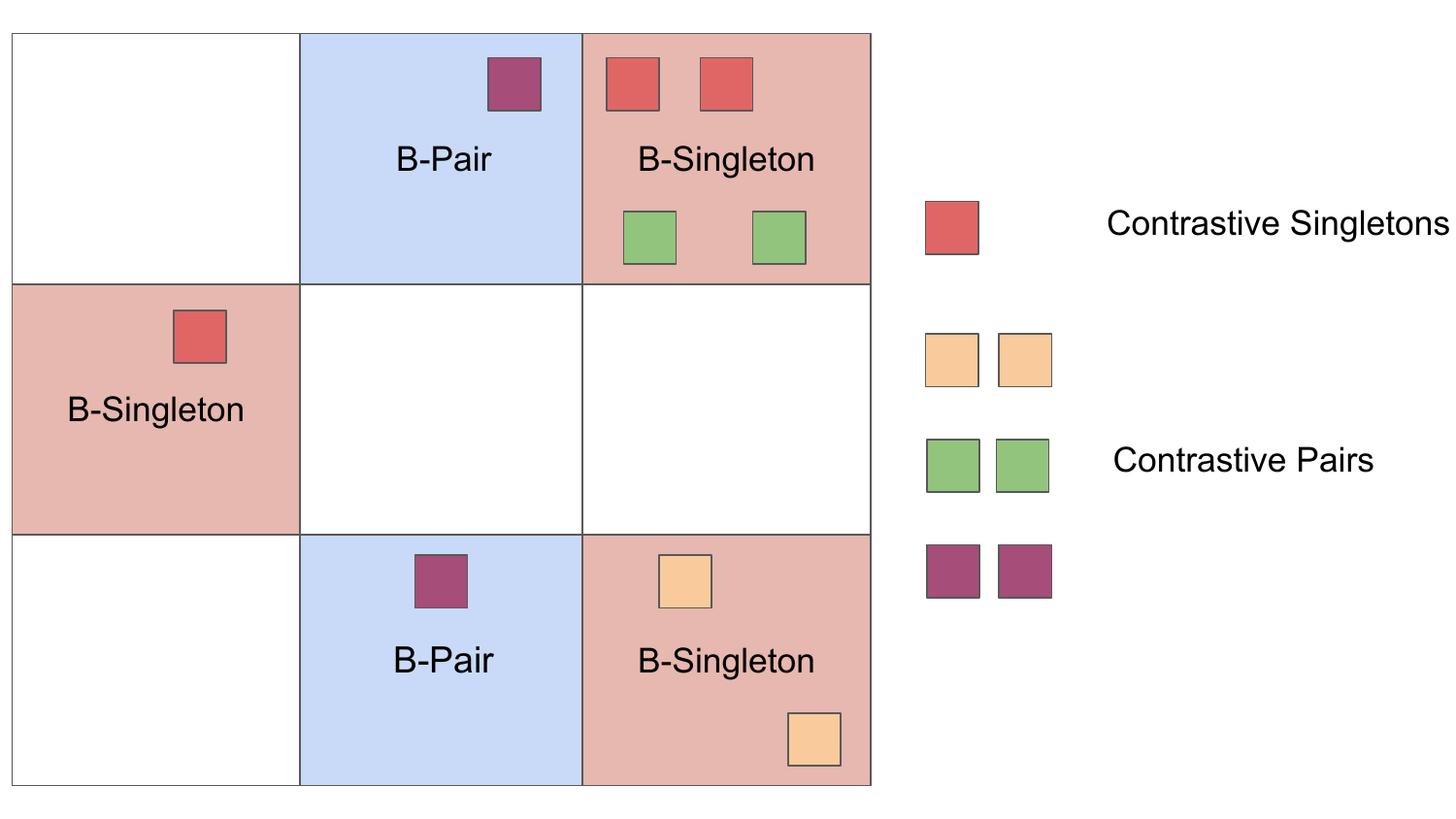}
	\caption{An illustration of a bundle explanation. B-Singletons and
		B-Pairs represent contrastive bundle singletons and
		pairs. A lower bound can be computed either in terms
		of bundles, or in terms of the features that
		comprise them.}
	\label{fig:bundle_approximation_example}
\end{figure}
	\begin{figure}
		\begin{center}
			\includegraphics[width=0.7\linewidth]{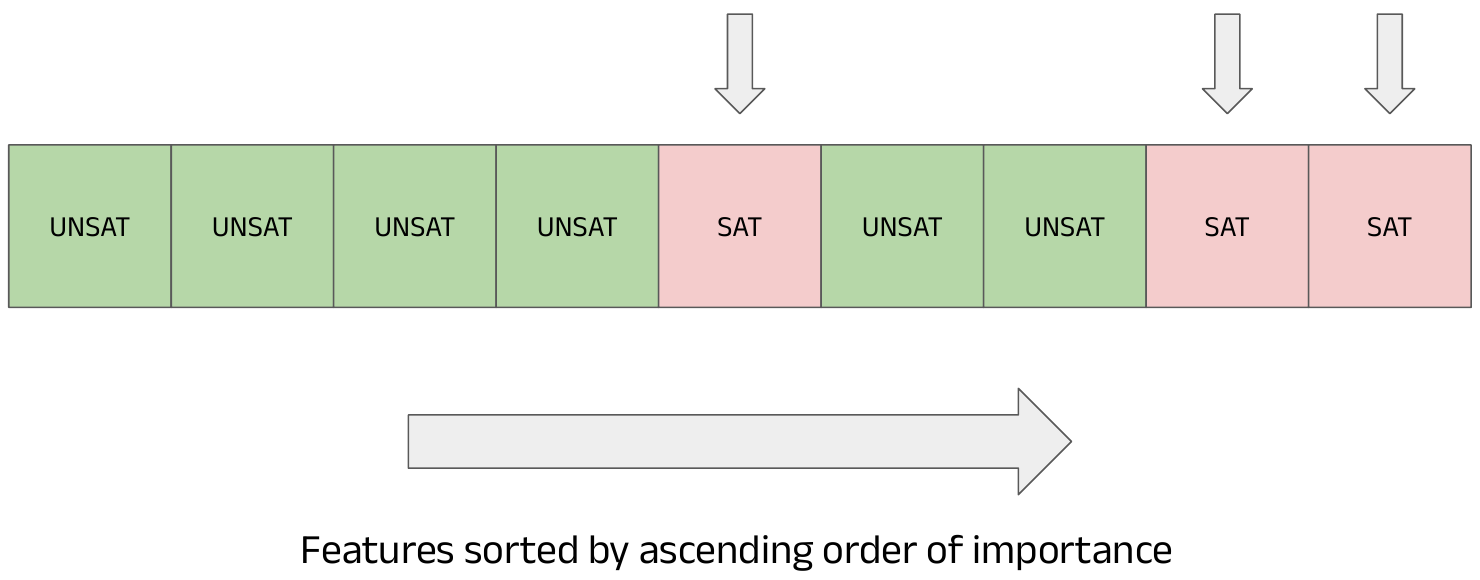}
			\caption{Features are sorted in increasing order of
				importance. \sat{} queries indicate features included in the
				explanation. We perform a sequence of binary searches to
				identify these \sat{} queries.
			}
			\label{fig:binary-search}
		\end{center}
	\end{figure}

 	\begin{figure}[!htp]
		\begin{center}
			\includegraphics[width=0.7\linewidth]{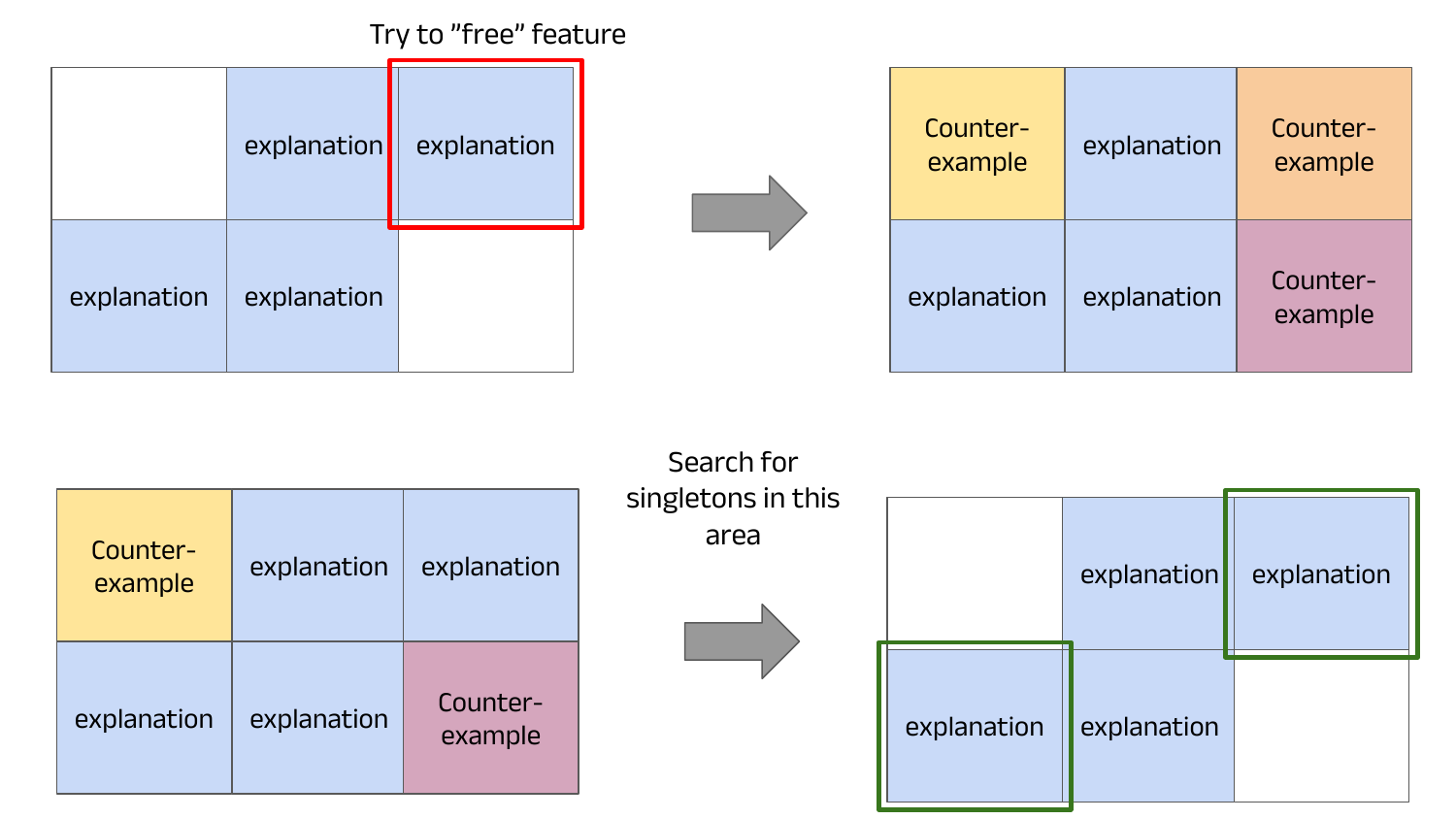}
			\caption{An illustration of the local-singleton search. Elements in blue
				represent features that are currently in the
				explanation (not ``freed''), and are thus set to the input
				value. We try to ``free'' a feature, find a
				counterexample, and search for \emph{local
					singletons} in the nearby area of that counter-example. These
				features are bound to be part of the
				explanation, hence reducing the number of
				features that remained for verifying.}
			\label{fig:local-singelton-search}
		\end{center}
	\end{figure}
\begin{figure}
	\begin{center}
		\includegraphics[width=0.5\linewidth]{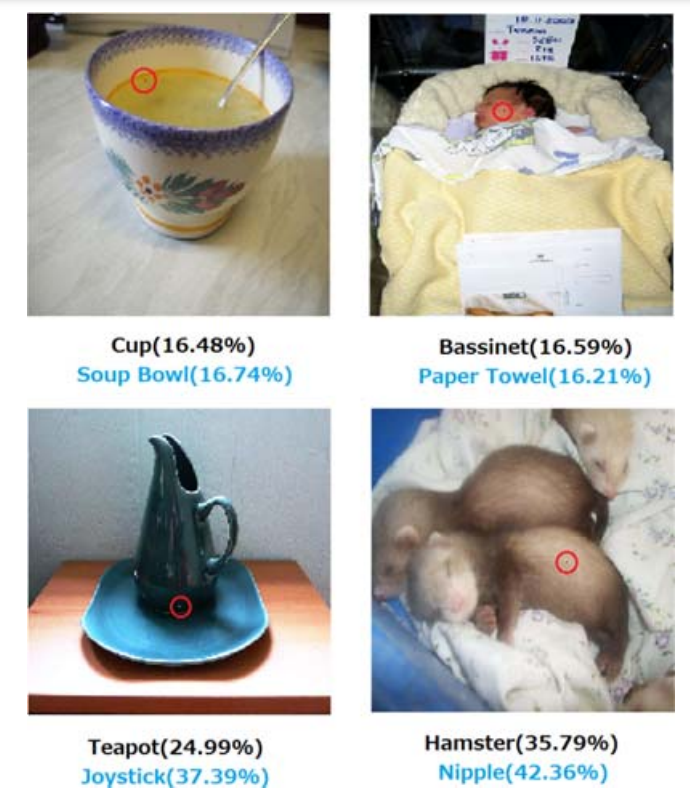}
		\caption{Examples of one-pixel-attacks on the Imagenet
			database, borrowed from~\cite{su2019one}. These represent
			\emph{contrastive singletons}.}
		\label{fig:one-pixel-attack}
	\end{center}
\end{figure}

\section{Additional Pseudo-Codes}

We present here additional pseudo-codes, specifically of the binary search heuristic~\ref{alg:upper-thread-binary-search} for expediting the search for minimal explanations.
	\begin{algorithm}
		
		\algnewcommand\algorithmicforeach{\textbf{for each}}
		\algdef{S}[FOR]{ForEach}[1]{\algorithmicforeach\ #1\ \algorithmicdo}
		\caption{\tub using binary-search}\label{alg:upper-thread-binary-search}
		\begin{algorithmic}[1]
			\State{Use a heuristic model to sort $F$ by ascending relevance}
			\State $L=0$
			\State $R=|F|-1$
			\While {$L\leq |F|-1$}
			\While {$L\leq R$} \Comment{The inner loop is a single binary search}
			\State Mid $\gets \frac{L-R}{2}$
			\State \explanation$\gets$F$\setminus$Free
			\If{\verifybinary is \unsat{}}
			\State \free $\gets$ \free $\cup$ \{L,L$+1$,...,Mid\}
			\State \upperb $\gets$ \upperb$- |\{L,L$+1$,...,$Mid$\}|$
			\State $L$ $\gets$ Mid$+1$
			\Else
			\State $R$ $\gets$ Mid$-1$
			\EndIf
			\EndWhile
			\State $L$ $\gets$ $L+1$
			\State $R$ $\gets |F|-1$
			\EndWhile
		\end{algorithmic}
	\end{algorithm}

\end{document}
